\documentclass[11pt]{article}

\usepackage[a4paper,top=3cm,bottom=3cm,left=3cm,right=3cm,marginparwidth=1.75cm]{geometry}
\usepackage[utf8]{inputenc} % allow utf-8 input
\usepackage[T1]{fontenc}    % use 8-bit T1 fonts
\usepackage{hyperref}       % hyperlinks
\usepackage{url}            % simple URL typesetting
\usepackage{booktabs}       % professional-quality tables
\usepackage{amsfonts}       % blackboard math symbols
\usepackage{nicefrac}       % compact symbols for 1/2, etc.
\usepackage{microtype}      % microtypography
\usepackage{graphicx}
\usepackage{amsmath}
\usepackage[capitalize,noabbrev]{cleveref}
\usepackage{microtype}
\usepackage{graphicx}
\usepackage{subcaption}
\usepackage{booktabs} % for professional tables
\usepackage{multirow}
\usepackage{multicol}
\usepackage{amssymb}
\usepackage{amsthm}
\usepackage{hyperref}
\usepackage{algorithm}
\usepackage{algorithmic}
\usepackage{subcaption}
\usepackage{physics}
\usepackage{pifont}
\usepackage{xspace}
\usepackage[table]{xcolor}
\usepackage{authblk}  % for multiple authors/affiliations
\usepackage{wrapfig}
\usepackage{natbib}

\theoremstyle{plain}
\newtheorem{thm}{Theorem}[section]
\newtheorem{prop}[thm]{Proposition}
\newtheorem{lem}[thm]{Lemma}

\theoremstyle{definition}

\newtheorem{asm}[thm]{Assumption}
\theoremstyle{remark}

\crefalias{lem}{thm}
\crefalias{prop}{thm}
\crefalias{cor}{thm}
\crefalias{definition}{thm}
\crefalias{asm}{thm}
\crefalias{rem}{thm}

\crefname{thm}{theorem}{theorems}
\Crefname{thm}{Theorem}{Theorems}

\crefname{prop}{proposition}{propositions}
\Crefname{prop}{Proposition}{Propositions}

\crefname{lem}{lemma}{lemmas}
\Crefname{lem}{Lemma}{Lemmas}

\crefname{cor}{corollary}{corollaries}
\Crefname{cor}{Corollary}{Corollaries}

\crefname{definition}{definition}{definitions}
\Crefname{definition}{Definition}{Definitions}

\crefname{asm}{assumption}{assumptions}
\Crefname{asm}{Assumption}{Assumptions}

\crefname{rem}{remark}{remarks}
\Crefname{rem}{Remark}{Remarks}

\title{Understanding and Guiding Layer Placement in Parameter-Efficient Fine-Tuning of Large Language Models}

\makeatletter
\patchcmd{\AB@output}{\par\vspace{1em}}{}{}{}
\renewcommand\AB@affilsepx{\quad}  
\makeatother

\author[1]{Yichen Xu (\textdagger,\textasteriskcentered)}
\author[2]{Yuyang Liang (\textdagger)}
\author[2]{Shan Dai (\textasteriskcentered)}
\author[2]{Tianyang Hu}
\author[3]{Tsz Nam Chan}
\author[2]{Chenhao Ma (\textasteriskcentered)}

\affil[1]{University of California, Berkeley\\
\texttt{yichen\_xu@berkeley.edu}}

\affil[2]{The Chinese University of Hong Kong, Shenzhen\\
\texttt{\{daishan, machenhao\}@cuhk.edu.cn}}

\affil[3]{Shenzhen University}

\begin{document}

\date{}
\maketitle
\begin{center}
\textdagger\ Equal contribution. \textasteriskcentered\ Corresponding author.
\end{center}

\begin{abstract}
As large language models (LLMs) continue to grow, the cost of full-parameter fine-tuning has made parameter-efficient fine-tuning (PEFT) the default strategy for downstream adaptation. Constraints from inference latency in scalable serving and fine-tuning cost in edge or rapid-deployment settings make the choice of which layers to fine-tune unavoidable. Yet current practice always does PEFT at all layers, with limited understanding and leverage of layer selection. This paper develops a unified projected residual view of PEFT on top of a frozen base model. Under a local quadratic approximation, layerwise adaptation is governed by three quantities: (i) the projected residual norm (resnorm), which measures how much correctable bias a layer can capture; (ii) the activation energy, which determines feature conditioning; and (iii) layer coupling, which quantifies how strongly residuals interact across layers. We show that, for squared loss and linear adapters, the resnorm equals a normalized gradient norm, activation energy controls ill-conditioning and noise amplification, and weak coupling yields approximately additive layerwise contributions. Building on these insights, we introduce the Layer Card, a reusable diagnostic that summarizes residual signal strength, compute cost, and performance for each layer of a given model. With an identical model and LoRA configuration, Layer Card–guided placement refines the choice of adapted layers to flexibly prioritize different objectives, such as maximizing performance or reducing fine-tuning cost. Moreover, on Qwen3-8B, we show that selectively adapting a subset of layers can achieve performance close to full-layer LoRA while substantially reducing fine-tuning cost and the number of adapter-augmented layers during inference, offering a more cost–performance–aware alternative to full-layer insertion.
\end{abstract}

%%%%%%%%%%%%%%%%%%%%%%%%%%%%%%%%%%%%%%%%%%%%%%%%%%%%%%%%%%%%%%%%%%%%%%%%%%%%%

\section{Introduction}

Large language models (LLMs) have emerged as a dominant paradigm for building general-purpose NLP systems, driven by large-scale pre-training and continued model scaling \cite{zhao25}. As LLMs grow in size, the cost of fine-tuning and inference latency have become binding constraints for downstream adaptation. Full-model fine-tuning is prohibitively expensive in both GPU memory and computation time \cite{xia2024understanding}, motivating parameter-efficient fine-tuning (PEFT) as the default approach \cite{han24}. PEFT methods typically keep most of the backbone frozen while updating or inserting a small number of trainable parameters to improve efficiency \cite{he2022towards, mai2025lessons}. Representative techniques include Adapter modules \cite{houlsby19} and LoRA \cite{hu2022lora}, which insert trainable components into transformer layers. However, most existing PEFT methods apply these modifications to every transformer layer by default, which can still incur substantial fine-tuning overhead and increase inference latency \cite{belanec25, gowda25}.

These constraints raise a natural question: can the placement of PEFT modules be understood theoretically and guided in a systematic manner, depending on downstream application requirements and trade-offs such as reducing inference latency, minimizing fine-tuning time and memory, or maximizing task performance, rather than applying them to every layer without flexibility? We thus take a step toward answering this question as a first class problem and frame PEFT as projected residual correction.

In this article, we formulate parameter-efficient fine-tuning as a layer-wise residual projection problem and, under a local quadratic approximation, identify three governing factors: the \textit{projected residual norm} (resnorm), which measures correctable task signal; \textit{activation energy}, which captures feature scale and conditioning; and \textit{inter-layer coupling}, which shapes how layerwise updates interact. Across four large language models and seven datasets, we show that resnorm, approximated by covariance-normalized gradient norms, provides information beyond raw gradients but is insufficient alone, as earlier layers often exhibit larger resnorm yet are harder to optimize due to ill-conditioned feature spaces. We further show that weakly coupled layers yield a larger lower bound on bias correction, motivating \textit{uniform depth-spread layer} as a robust partial-layer strategy. In addition, we demonstrate that fine-tuning cost depends strongly on layer depth, making adapter placement a dominant driver of memory usage and time cost beyond parameter count. Building on these findings, we introduce the \textit{Layer Card}, a reusable diagnostic that summarizes layerwise residual signal, performance, and compute cost. Layer Card–guided placement enables objective-driven layer selection: on GPT-2, it yields up to 111\% performance gains when prioritizing accuracy and over $2.3\times$ lower peak memory when prioritizing efficiency with a fixed LoRA configuration; on Qwen3-8B, inserting PEFT modules into only 5 of 35 layers achieves performance close to full-layer LoRA, delivering 55--75\% training speedups with modest 9--17\% performance degradation while reducing the number of adapter-augmented layers at inference. \Cref{fig:teaser} provides an overview of the framework.

\begin{figure}[t]
  \centering

  \captionsetup{labelformat=empty}
  \captionsetup[subfigure]{font=small}

  % ---------- Top row ----------
  \begin{subfigure}{0.48\linewidth}
    \centering
    \includegraphics[width=\linewidth]{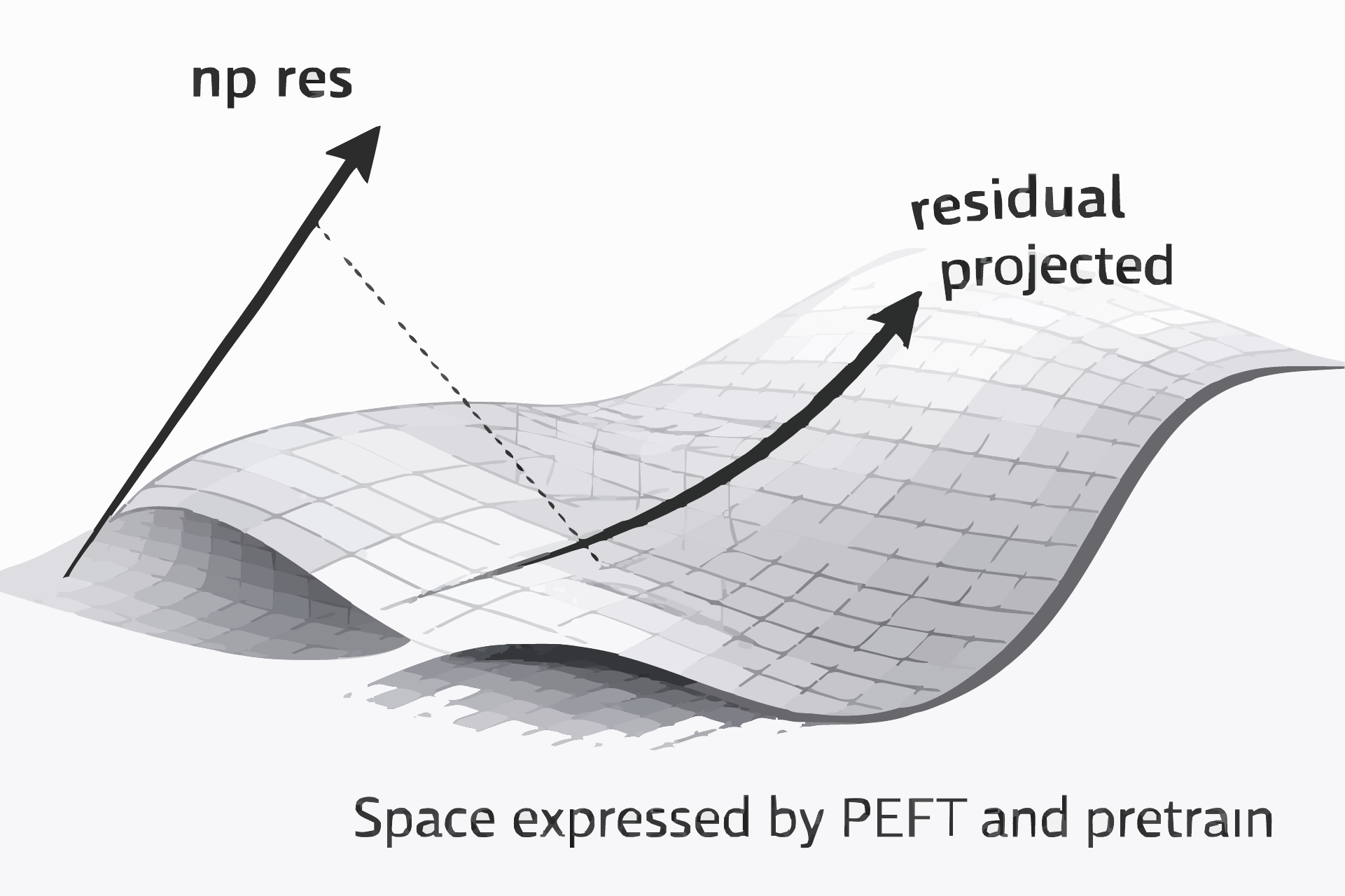}
    \caption{Potential bias and projected residual correction}
  \end{subfigure}
  \hfill
  \begin{subfigure}{0.48\linewidth}
    \centering
    \includegraphics[width=\linewidth]{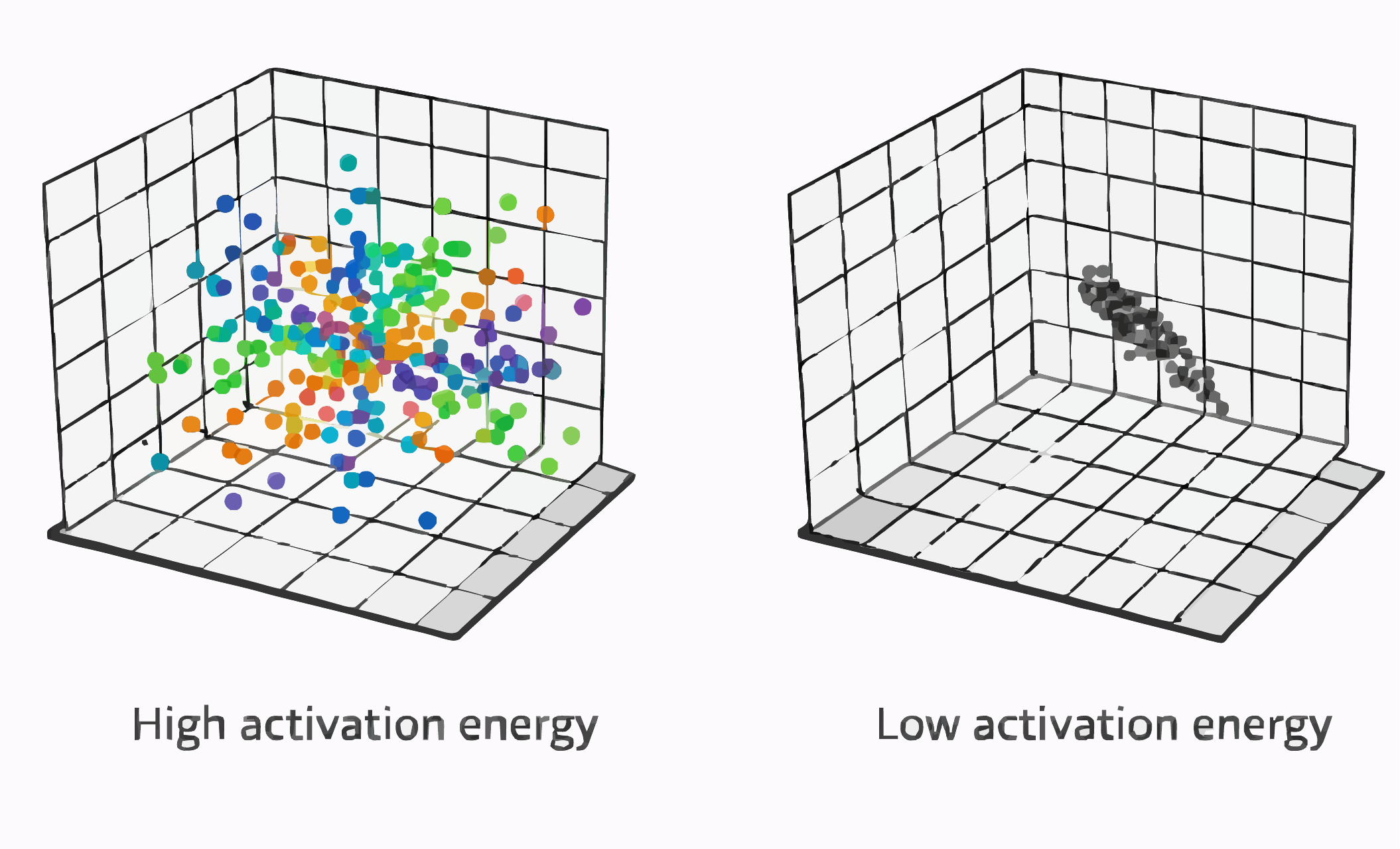}
    \caption{Activation energy shapes optimization landscape}
  \end{subfigure}

  \vspace{0.5em}

  % ---------- Bottom row ----------
  \begin{subfigure}{0.48\linewidth}
    \centering
    \includegraphics[width=\linewidth]{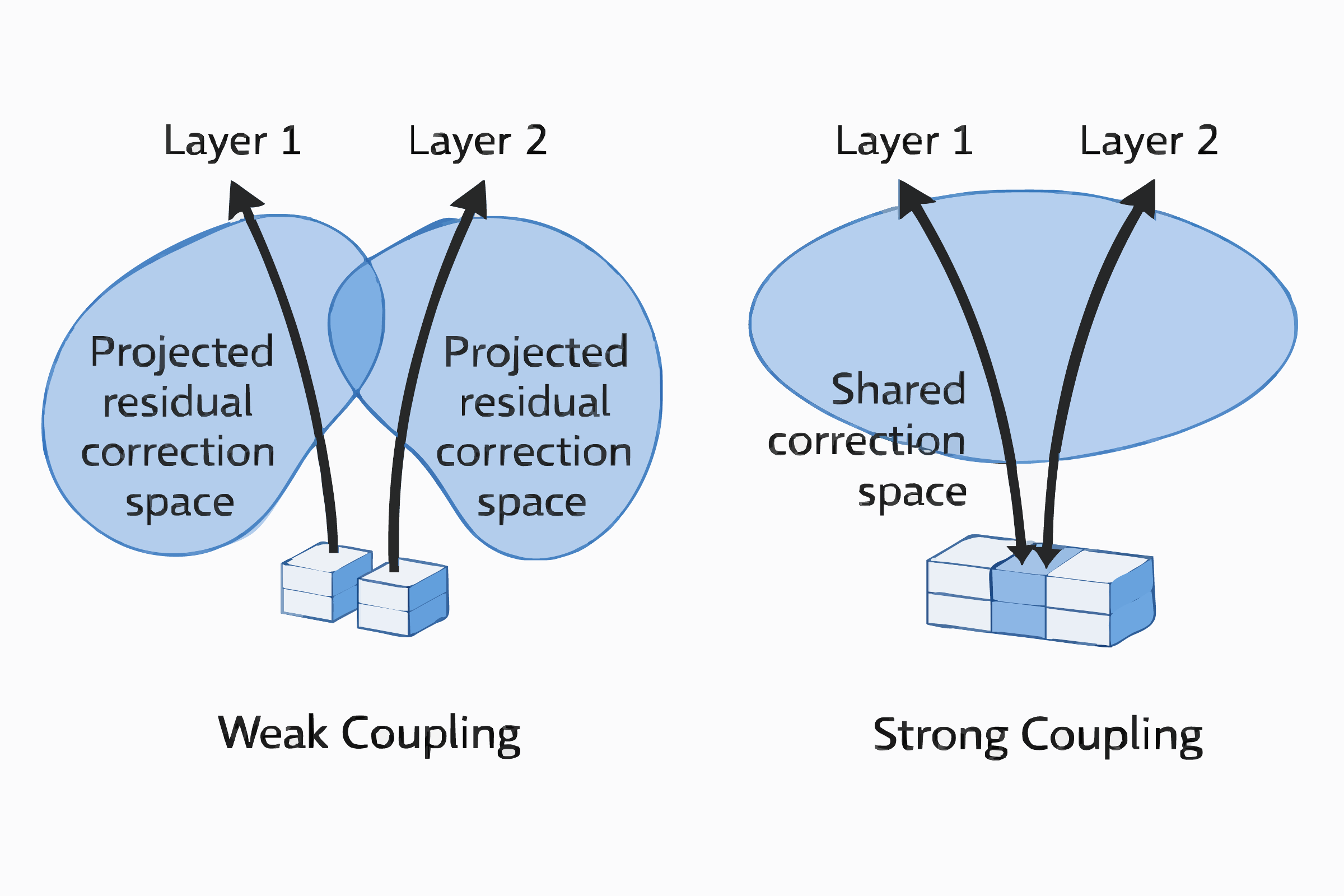}
    \caption{Weak vs.\ strong inter-layer coupling}
  \end{subfigure}
  \hfill
  \begin{subfigure}{0.48\linewidth}
    \centering
    \includegraphics[width=\linewidth]{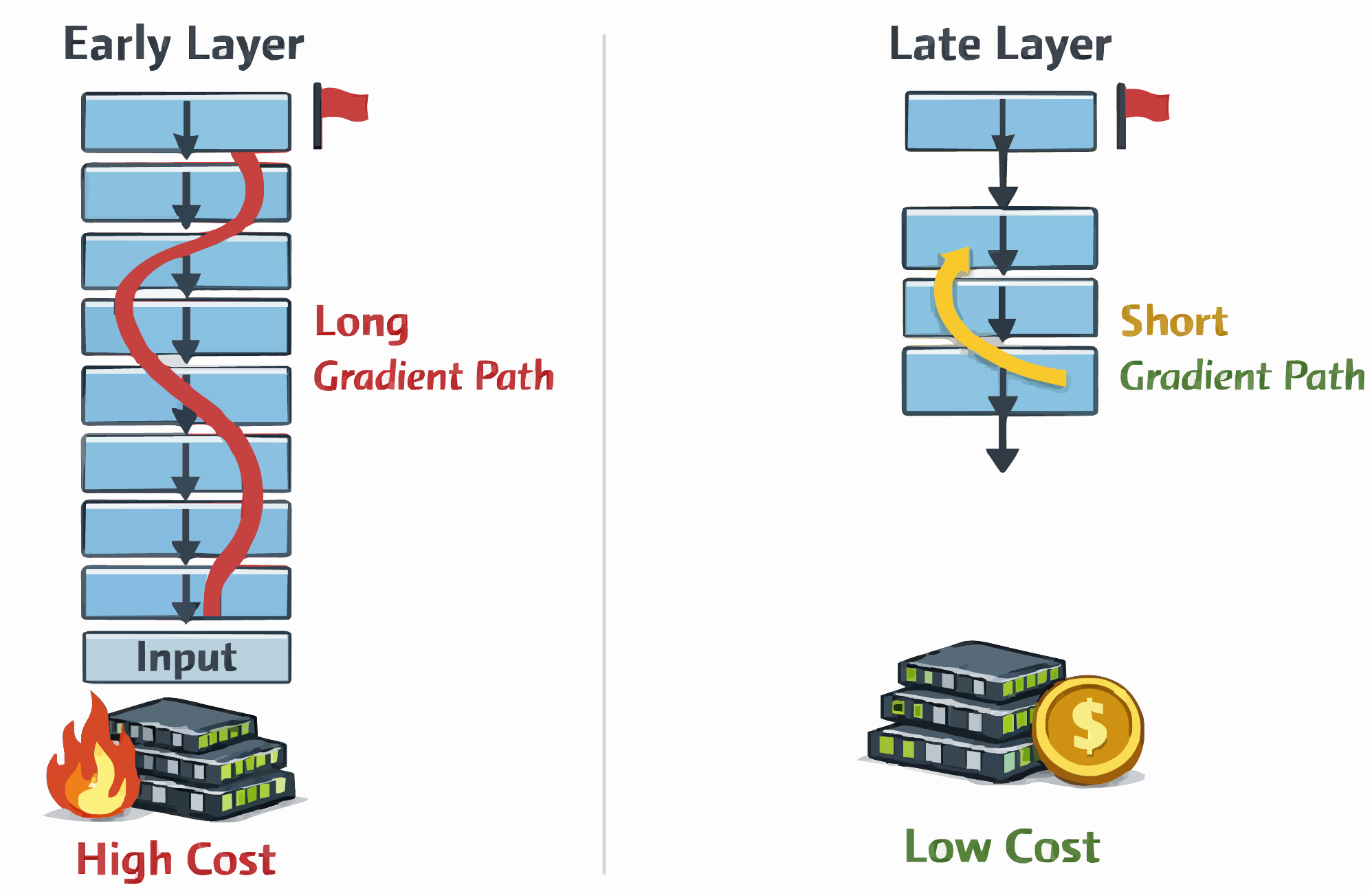}
    \caption{Depth-dependent fine-tuning cost}
  \end{subfigure}

  \caption{
  Overview of the projected residual framework for parameter-efficient fine-tuning
  }
  \label{fig:teaser}
\end{figure}

\section{Related work}

\paragraph{Parameter-efficient fine-tuning.} PEFT adapts pretrained language models by updating or inserting a small number of trainable parameters while keeping the backbone frozen. Representative approaches include Adapter modules \cite{houlsby19}, which insert lightweight bottleneck networks into transformer blocks, and Low-Rank Adaptation (LoRA) \cite{hu2022lora}, which applies low-rank updates to existing weight matrices. Other variants modify bias terms \cite{zaken2022bitfit}, introduce learned scaling vectors \cite{liu2022few}, or prepend virtual tokens \cite{li21}. These methods are typically applied to all transformer layers, prioritizing parameter efficiency while overlooking layerwise heterogeneity in adaptation signal, optimization difficulty, and computational cost.

\paragraph{Inference latency.} Despite being parameter-efficient, adapters introduce non-negligible inference overhead. Sequential adapters incur roughly 4–6\% slowdown relative to full fine-tuning, which compounds in multi-task settings such as AdapterFusion \cite{pfeiffer2021adapterfusion, ruckle2021adapterdrop}. Unmerged LoRA can impose even larger overheads, with reported slowdowns of up to 54\% due to additional low-rank computations at inference \cite{zhang24}. Although merging LoRA mitigates overhead for a single adapter, it is incompatible with large-scale multi-adapter serving, where unmerged adapters remain necessary, making adapter count and placement the primary levers for inference efficiency \cite{sheng2023s, gowda25}.

\paragraph{Fine-tuning cost.} Beyond inference latency, fine-tuning cost is a major constraint. Recent work on edge and mobile deployment shows that even parameter-efficient fine-tuning can be prohibitively slow and memory-infeasible, as backpropagation through the frozen backbone dominates training time and exceeds device memory budgets \cite{li25, parthasarathy24, xia2024understanding}. Existing surveys and benchmarks of PEFT methods \cite{pu2023empirical, belanec25} largely overlook the role of layer selection. LISA \cite{pan2024lisa} reduces memory usage by updating random subsets of layers, but does not model layerwise heterogeneity and modifies base model parameters, limiting compatibility with reusable adapters and large-scale serving.

% Continual fine-tuning induces catastrophic forgetting in LLMs, degrading previously learned capabilities \cite{luo25, kotha24}, and this effect can persist even under parameter-efficient methods such as LoRA unless parameter updates are strongly restricted \cite{hui25}. Taken together, constraints from inference-time latency, fine-tuning cost, and catastrophic forgetting jointly elevate layerwise PEFT placement from an implementation detail to a first-class problem that requires principled analysis.

\paragraph{Layerwise study.} Prior work shows that layers in deep networks are not equivalent: re-initialization, ablation, and Shapley-based analyses reveal strong layerwise heterogeneity and identify critical and redundant layers in pretrained models \cite{zhang22, zhang24_layer}. Representation quality also varies systematically across depth, with intermediate layers often yielding the most informative embeddings \cite{skeanlayer}. However, these studies primarily analyze inference-time properties and do not model task-specific adaptation. Raw gradient norm has been proposed as a layer-selection signal \cite{liu2025efficient}, but can be noisy \cite{zhang23}. Although later layers are often effective for PEFT \cite{pu2023empirical}, our results show that no universal rule holds, as optimal layer selection depends on scenario-specific trade-offs.

\section{Projected residuals}
\label{sec:projected_residuals}

This section develops a projected-residual view of parameter-efficient fine-tuning. We first formalize PEFT as a layerwise residual intervention on a frozen model (\Cref{subsec:peft-as-res}). We then analyze how global adaptation decomposes into layerwise contributions under a local quadratic approximation and characterize inter-layer coupling (\Cref{subsec:decomposition}). Finally, we relate projected residuals to covariance-normalized gradients,
motivating a practical layerwise diagnostic
(\Cref{subsec:proj_res_grad}).

\subsection{PEFT as layerwise residual intervention.}
\label{subsec:peft-as-res}
We write a frozen pretrained model as a composition of $L$ blocks, where $h_\ell(x)$ denotes the hidden representation at layer $\ell$ with $h_0(x)$ given by the input embedding,
\begin{align*}
h_\ell(x)
&= B_\ell\! \bigl (h_{\ell-1}(x)\bigr), 
\qquad
F(x)
= \text{Head} \!\bigl (h_L(x)\bigr),
\end{align*}
with all blocks $B_\ell$ and the head frozen. The goal of fine-tuning is to move the function from the frozen model $F(x)$ to a non-parametric oracle $F^{*}(x)$ for the target task.

A fine-tuning configuration selects adapted layers
$S\subseteq\{1,\dots,L\}$ and parameters $\theta=\{\theta_\ell\}_{\ell\in S}$. At each $\ell\in S$, PEFT parameterizes a driver residual $r_\ell^{(\theta_\ell)}$ acting on a subcomponent of $B_\ell$. The driver residual for some commonly used PEFT can be seen in \Cref{tab:peft-res}. Composing this driver with the frozen remainder of the block induces the block residual
\begin{align*}
\delta_\ell^{(\theta_\ell)}(h)
&:= B_\ell^{(\theta_\ell)}(h) - B_\ell(h).
\end{align*}

The driver residual $r_\ell^{(\theta_\ell)}$ defines the feasible functional space of blockwise residuals $\delta_\ell^{(\theta_\ell)}$. The global residual non-parametric residual $F^{*} - F$ is thus projected to the function space $F^{*}-F_\theta$ parametrized by $\theta$.

\begin{table}[t]
\centering
\small
\caption{Driver residual parameterizations for common PEFT methods and full fine-tuning.}
\label{tab:peft-res}
\resizebox{\columnwidth}{!}{
\begin{tabular}{lll}
\hline
Method & Trainable $\theta_\ell$ & Driver residual $r_\ell^{(\theta_\ell)}(h)$ \\
\hline
LoRA \cite{hu2022lora}
& $A_\ell,B_\ell$ (scale $s$)
& $A_\ell B_\ell h$ \\
Adapter \cite{houlsby19}
& $U_\ell,D_\ell$
& $U_\ell\,\sigma(D_\ell h)$ \\
Prefix \cite{li21}
& $P_{K,\ell},P_{V,\ell}$
& $\mathrm{Attn}_\ell^{(\theta)}(h)-\mathrm{Attn}_\ell(h)$ \\
BitFit \cite{zaken2022bitfit}
& $\Delta b_\ell$
& $\Delta b_\ell$ \\
Full fine-tuning
& All parameters in $B_\ell$
& $B_\ell^{(\theta_\ell)}(h)-B_\ell(h)$ \\
\hline
\end{tabular}
}
\end{table}

For a loss $\ell$ and data distribution on $(x,y)$, define the risk of a functional residual $r$ as
$\mathcal R(r)
:=
\mathbb E[\ell(F(x)+r(x),y)]$.
Fix adapted layers $S$ with parameter space $\Theta_S$, inducing a feasible
residual class
$\mathcal H_S := \{\,r_\theta(\cdot): \theta\in\Theta_S\,\}$.
The optimal achievable residual is
$r_S^\star := r_{\theta_S^\star}$ with
$\theta_S^\star \in \arg\min_{\theta\in\Theta_S} \mathcal R(r_\theta)$.
Let $r^\star(x):=F^\star(x)-F(x)$ denote the unconstrained task residual. From a
functional viewpoint, $r_S^\star$ is the projection of $r^\star$ onto
$\mathcal H_S$, with approximation gap
$\mathcal R(r_S^\star)-\mathcal R(r^\star)\ge 0$.

From this perspective, fine-tuning methods can be interpreted through the functional residual space they induce. Full fine-tuning allows an almost unrestricted residual space, enabling arbitrary end-to-end corrections but at the cost of higher variance, more difficult optimization, and increased susceptibility to overfitting and forgetting \cite{luo2025empirical}. In contrast, PEFT constrains $\mathcal H_S$ to a structured, low-dimensional residual family defined by its parameterization; when the task residual lies close to this space, parameter-efficient adaptation introduces little bias and remains effective, whereas the greater residual freedom of full fine-tuning can amplify instability and catastrophic forgetting. This also explains why stricter feasible space like $IA^{3}$ can achieve less variance compared to LoRA \cite{belanec25}

Core analytical assumptions used below are stated in
Assumption~\ref{ass:local_quadratic_pd}, which is natural in the PEFT regime where parameter updates remain local around the frozen model. While our analysis is formulated as a general framework, we empirically validate the resulting insights using LoRA as a representative PEFT method and use it to instantiate the Layer Card diagnostic in practice.

\begin{asm}[Local quadratic surrogate with identifiable layerwise curvature]
\label{ass:local_quadratic_pd}
The population loss $\mathcal L(\theta)$ is twice differentiable in a
neighborhood of the frozen model $\theta=0$ and admits a second-order
expansion
\begin{align*}
\mathcal L(\theta)
&=
\mathcal L(0)
+
g^\top \theta
+
\tfrac12\, \theta^\top H \theta
+
o(\|\theta\|_2^2),
\end{align*}

where $g=\nabla\mathcal L(0)$ and $H=\nabla^2\mathcal L(0)$.
Moreover, for each layer $\ell$, the block Hessian $H_{\ell\ell}$ and
the feature covariance $\Sigma_\ell$ are positive definite.
\end{asm}

\subsection{Decomposing global adaptation into layerwise contributions}
\label{subsec:decomposition}

We analyze how joint PEFT optimization relates to independent layerwise updates
and how cross-layer interactions affect additivity.

Let $\theta=(\theta_1,\dots,\theta_L)\in\Theta$ denote PEFT parameters partitioned by layers, and consider the squared-loss risk \footnote{
Squared loss corresponds to maximum likelihood estimation under Gaussian noise and locally approximates common objectives such as cross-entropy. For high-dimensional outputs, this reduces to a sum of squared errors (an $\ell_2$ loss) across output dimensions, so the derivation remains unchanged.
}

\begin{align*}
\mathcal L(\theta)
&:=
\tfrac12\,\mathbb E_x\bigl[(F(x;\theta)-F^\star(x))^2\bigr].
\end{align*}

Under Assumption~\ref{ass:local_quadratic_pd}, $\mathcal L$ admits the local
quadratic surrogate
\begin{align*}
Q(\theta)
&=
\mathcal L(0)
+
g^\top \theta
+
\tfrac12\, \theta^\top H \theta,
\end{align*}

where $g=\nabla\mathcal L(0)$ and $H=\nabla^2\mathcal L(0)$.
The global quadratic PEFT oracle is
\[
\theta^{\mathrm{glob}}_{\mathrm{quad}} = -H^{-1}g.
\]

To isolate layerwise contributions, define the blockwise quadratic oracle that
adapts each layer independently:
\[
\theta^{\mathrm{loc}}_{\mathrm{quad}}
=
(-H_{11}^{-1}g_1,\dots,-H_{LL}^{-1}g_L).
\]
If the problem were block-separable, these solutions would coincide; their
difference therefore quantifies cross-layer interaction.

Write the Hessian as $H=D+E$ with $D=\mathrm{diag}(H_{11},\dots,H_{LL})$ and define
the curvature-normalized coupling
\begin{align*}
\rho
&:= \bigl\| D^{-1/2} E D^{-1/2} \bigr\|_2 .
\end{align*}

Small $\rho$ indicates weak cross-layer interaction after accounting for scale
and conditioning.

\begin{thm}[Approximate additivity of quadratic PEFT residuals]
\label{thm:decomposition_main}
Under \Cref{ass:local_quadratic_pd} and assuming $\rho<1$.
Let $\Delta\theta_{\mathrm{quad}}
:=\theta^{\mathrm{glob}}_{\mathrm{quad}}-\theta^{\mathrm{loc}}_{\mathrm{quad}}$.
Then
\[
\|\Delta\theta_{\mathrm{quad}}\|_2
\;\le\;
\frac{\rho}{1-\rho}\,\|D^{-1}\|_2\,\|g\|_2 .
\]

Moreover, if $F(x;\theta)-F(x;0) = J(x)^\top\theta + \tfrac12\,\theta^\top K(x)\theta$ with block decomposition $K(x)=D_K(x)+E_K(x)$ and residual coupling
$\rho_K(x):=\|D_K(x)^{-1/2}E_K(x)D_K(x)^{-1/2}\|_2<1$, then for all $x$,
\begin{align*}
\Bigl|
r^{\mathrm{glob}}_{\mathrm{quad}}(x)
-
\sum_{\ell=1}^L r^{\mathrm{loc}}_{\ell,\mathrm{quad}}(x)
\Bigr|
&=
O\!\left(
\|\Delta\theta_{\mathrm{quad}}\|_2
+
\rho_K(x)\,
\|\theta^{\mathrm{glob}}_{\mathrm{quad}}\|_2^2
\right),
\end{align*}
where $r^{\mathrm{glob}}_{\mathrm{quad}}(x) = r(x;\theta^{\mathrm{glob}}_{\mathrm{quad}})$ and $r^{\mathrm{loc}}_{\ell,\mathrm{quad}}(x) = r(x;\theta^{\mathrm{loc}}_{\ell,\mathrm{quad}})$, with $r(x;\theta) := F(x;\theta) - F(x;0)$. The constants in the bound depend on $\|J(x)\|_2$ and $\|D_K(x)\|_2$.
\end{thm}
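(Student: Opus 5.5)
The plan is a Neumann-series argument for the parameter gap, followed by a term-by-term comparison of the linear and quadratic parts of $r(x;\theta)$.

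\textbf{Step 1 (parameter gap).} Since $D$ is block diagonal with positive definite blocks by \Cref{ass:local_quadratic_pd}, $D^{1/2}$ and $D^{-1/2}$ exist, and we can write
\[
H=D^{1/2}(I+M)D^{1/2}, \qquad M:=D^{-1/2}ED^{-1/2}, \qquad \|M\|_2=\rho<1 .
\]
Hence $I+M$ is invertible with the Neumann series $(I+M)^{-1}=\sum_{k\ge0}(-M)^k$. This gives
\[
H^{-1}-D^{-1}=D^{-1/2}\bigl[(I+M)^{-1}-I\bigr]D^{-1/2}, \qquad \bigl\|(I+M)^{-1}-I\bigr\|_2\le\sum_{k\ge1}\rho^k=\frac{\rho}{1-\rho}.
\]
Because $\theta^{\mathrm{loc}}_{\mathrm{quad}}=-D^{-1}g$, we have $\Delta\theta_{\mathrm{quad}}=-(H^{-1}-D^{-1})g$. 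Submultiplicativity together with $\|D^{-1/2}\|_2^2=\|D^{-1}\|_2$ then yields the first bound. One subtlety is that positivity of $H$ itself is not assumed, but $\rho<1$ implies it through this factorization, so $\theta^{\mathrm{glob}}_{\mathrm{quad}}$ is well defined.

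\textbf{Step 2 (function-space decomposition).} I read $r^{\mathrm{loc}}_{\ell,\mathrm{quad}}$ as $r$ evaluated at the vector $\theta^{\mathrm{loc}}_{\ell}$, which is zero outside block $\ell$. I then split the target difference as
\[
\bigl[r(x;\theta^{\mathrm{glob}})-r(x;\theta^{\mathrm{loc}})\bigr]+\Bigl[r(x;\theta^{\mathrm{loc}})-\sum_\ell r(x;\theta^{\mathrm{loc}}_\ell)\Bigr].
\]
The linear part $J^\top\theta$ is exactly additive across blocks, so the second bracket equals the cross-block quadratic term $\tfrac12(\theta^{\mathrm{loc}})^\top E_K\,\theta^{\mathrm{loc}}$. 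The diagonal part $D_K$ cancels because the block-diagonal quadratic form is a sum over blocks.

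For the first bracket, set $a=\theta^{\mathrm{glob}}$ and $b=\theta^{\mathrm{loc}}$. It equals
\[
J^\top(a-b)+\tfrac12\bigl(a^\top Ka-b^\top Kb\bigr), \qquad a^\top Ka-b^\top Kb=(a-b)^\top K(a+b).
\]
This is bounded by $\|J\|_2\|\Delta\theta\|_2+\tfrac12\|K\|_2\|\Delta\theta\|_2(\|a\|_2+\|b\|_2)$, which is $O(\|\Delta\theta\|_2)$ with constants depending on $\|J(x)\|_2$, $\|K(x)\|_2$ and the (local, bounded) size of the oracles.

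\textbf{Step 3 (coupling term).} Write $E_K=D_K^{1/2}M_KD_K^{1/2}$ with $\|M_K\|_2=\rho_K(x)$. Then
\[
|b^\top E_K b|\le\rho_K(x)\,\|D_K^{1/2}b\|_2^2\le\rho_K(x)\,\|D_K(x)\|_2\,\|b\|_2^2 .
\]
This requires $D_K$ to be positive semidefinite so that $D_K^{1/2}$ makes sense. I would state this as implicit in the definition of $\rho_K$, or else work with $|D_K|^{1/2}$ instead.

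Finally, $\|b\|_2\le\|a\|_2+\|\Delta\theta\|_2$, so
\[
\|b\|_2^2\le 2\|a\|_2^2+2\|\Delta\theta\|_2^2 .
\]
The extra $\|\Delta\theta\|_2^2$ piece is absorbed into $O(\|\Delta\theta\|_2)$ because the parameters lie in a bounded local neighborhood. Combining the pieces gives the claimed $O\bigl(\|\Delta\theta_{\mathrm{quad}}\|_2+\rho_K(x)\|\theta^{\mathrm{glob}}_{\mathrm{quad}}\|_2^2\bigr)$.

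\textbf{Main obstacle.} The delicate point is bookkeeping of constants. The stated constants mention only $\|J\|_2$ and $\|D_K\|_2$, but Step 2 uses $\|K\|_2$. I would control it by $\|K\|_2\le\|D_K\|_2(1+\rho_K)\le 2\|D_K\|_2$, again via the $D_K^{1/2}$ factorization, and rely on locality of $\theta$ to bound the oracle norms.
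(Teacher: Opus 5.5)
Your proposal is correct and follows the paper's proof. The parameter bound comes from the same Neumann-series factorization $H=D^{1/2}(I+M)D^{1/2}$, and the cross-block term is controlled by the same $D_K^{1/2}$-whitening bound $\rho_K(x)\|D_K(x)\|_2\|\theta\|_2^2$, with oracle norms and $\|\Delta\theta_{\mathrm{quad}}\|_2^2$ absorbed into constants by locality exactly as the paper does.

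The only difference is bookkeeping. You telescope through $\theta^{\mathrm{loc}}_{\mathrm{quad}}$, so the $E_K$ term is evaluated at $\theta^{\mathrm{loc}}_{\mathrm{quad}}$ and the remainder involves the full $K$; this forces your extra steps $\|K\|_2\le 2\|D_K\|_2$ and $\|\theta^{\mathrm{loc}}_{\mathrm{quad}}\|_2^2\le 2\|\theta^{\mathrm{glob}}_{\mathrm{quad}}\|_2^2+2\|\Delta\theta_{\mathrm{quad}}\|_2^2$. The paper instead expands directly around $\theta^{\mathrm{glob}}_{\mathrm{quad}}$, so the cross term sits at $\theta^{\mathrm{glob}}_{\mathrm{quad}}$ and only the diagonal blocks $K_{\ell\ell}$ multiply $\Delta\theta_{\mathrm{quad}}$, which gives the stated dependence on $\|D_K(x)\|_2$ directly.
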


When $\rho$ is small, layerwise updates are approximately decoupled: both the
global quadratic PEFT parameters and the induced residual are well approximated
by sums of independent layerwise contributions.

\subsection{Projected residuals approximated by normalized gradients}
\label{subsec:proj_res_grad}

Motivated by the decomposition of the global residual into layerwise
contributions, we analyze the correction signal induced by a single adapted
layer.
Fix a layer $\ell$ and let $x_\ell$ denote its frozen activations.
The adapter induces a layer-local residual function $r_\ell(x_\ell)$ whose
effect propagates through subsequent layers.

Let $\phi_\ell(x_\ell)\in\mathbb R^{m_\ell}$ be the feature map accessible to the
adapter at layer $\ell$, and consider the linear hypothesis class
\[
\mathcal H_\ell
=
\{\,x_\ell\mapsto \theta^\top\phi_\ell(x_\ell):\theta\in\mathbb R^{m_\ell}\,\}.
\]
We measure approximation error by the squared residual loss
\[
\mathcal R_\ell(\theta)
=
\tfrac12\,\mathbb E\!\left[(\theta^\top\phi_\ell(x_\ell)-r_\ell(x_\ell))^2\right].
\]
Let $\Sigma_\ell=\mathbb E[\phi_\ell(x_\ell)\phi_\ell(x_\ell)^\top]\succ0$ and
$g_\ell=\nabla_\theta\mathcal R_\ell(\theta)\big|_{\theta=0}$.

\begin{thm}[Projected residual norm equals normalized gradient norm]
\label{thm:layer_grad_residual_equivalence}
Let $r_{\ell,\mathrm{proj}}$ denote the projection of $r_\ell$
onto $\mathcal H_\ell$.
Then
\begin{align*}
\|r_{\ell,\mathrm{proj}}\|
&=
h_\ell^\top \Sigma_\ell^{-1} h_\ell .
\end{align*}

\end{thm}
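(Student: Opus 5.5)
The plan is to treat this as an exercise in $L^2$ projection onto a finite-dimensional linear subspace. The statement uses a symbol $h_\ell$ that is not defined in the setup, so I read it as the gradient at the origin, $h_\ell := g_\ell$ up to sign. I also read $\|r_{\ell,\mathrm{proj}}\|$ as the squared $L^2(P)$ norm $\mathbb E[r_{\ell,\mathrm{proj}}(x_\ell)^2]$. Without the square, the right-hand side is quadratic in $r_\ell$ while the left is linear, so the identity could not hold as written. I will state both conventions explicitly at the start of the proof.

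\textbf{Step 1: characterize the projection.} I will show that $r_{\ell,\mathrm{proj}}=\theta_\ell^{\star\top}\phi_\ell$, where $\theta_\ell^\star$ minimizes $\mathcal R_\ell$. Since $\mathcal R_\ell$ is a convex quadratic with Hessian $\Sigma_\ell\succ0$, the minimizer is unique. The first-order condition reads $\Sigma_\ell\theta-\mathbb E[\phi_\ell r_\ell]=0$, which gives $\theta_\ell^\star=\Sigma_\ell^{-1}\mathbb E[\phi_\ell r_\ell]$. Minimizing $\tfrac12\mathbb E[(\theta^\top\phi_\ell-r_\ell)^2]$ over $\mathcal H_\ell$ is exactly orthogonal projection in $L^2(P)$, so this characterization follows.

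\textbf{Step 2: link to the gradient.} Differentiating $\mathcal R_\ell$ at $\theta=0$ gives $g_\ell=-\mathbb E[\phi_\ell(x_\ell)r_\ell(x_\ell)]$. Hence $\theta_\ell^\star=-\Sigma_\ell^{-1}g_\ell$, which is just one Newton step from the origin.

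\textbf{Step 3: compute the norm.} Expanding,
\[
\mathbb E[r_{\ell,\mathrm{proj}}^2]=\theta_\ell^{\star\top}\Sigma_\ell\theta_\ell^\star=g_\ell^\top\Sigma_\ell^{-1}\Sigma_\ell\Sigma_\ell^{-1}g_\ell=g_\ell^\top\Sigma_\ell^{-1}g_\ell .
\]
This is the covariance-normalized gradient norm $\|\Sigma_\ell^{-1/2}g_\ell\|_2^2$, and it is invariant to the sign of $h_\ell$. As a by-product, I will note the Pythagorean identity $\mathcal R_\ell(0)-\mathcal R_\ell(\theta_\ell^\star)=\tfrac12 g_\ell^\top\Sigma_\ell^{-1}g_\ell$. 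This shows the resnorm equals twice the achievable loss reduction, which supports its interpretation as correctable bias.

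\textbf{Main obstacle.} The computation is routine; the delicate part is making the statement well-posed. That means fixing the undefined $h_\ell$ and committing to the squared norm. The only substantive assumption is $\Sigma_\ell\succ0$, which Assumption~\ref{ass:local_quadratic_pd} provides.
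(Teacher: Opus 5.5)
Your proposal is correct and follows the paper's proof essentially line by line: expand $\mathcal R_\ell$ as a quadratic, solve $\theta_\ell^\star=\Sigma_\ell^{-1}c_\ell$, identify the gradient at the origin as $-c_\ell$, and evaluate $\theta_\ell^{\star\top}\Sigma_\ell\theta_\ell^\star$. Your two interpretive choices are exactly what the paper's proof implicitly uses: $h_\ell$ is the gradient at $\theta=0$, and $\|r_{\ell,\mathrm{proj}}\|$ is the squared $L^2$ norm. Your explicit argument that the risk minimizer is the $L^2$ projection supplies a step the paper leaves unstated.
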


\begin{table}[t]
\centering
\caption{
Spearman rank correlation between layer importance rankings obtained at different adapter ranks.
Higher values indicate greater stability of layerwise ordering as rank increases.
}
\label{tab:rank_spearman}
\small
\begin{tabular}{llccc}
\toprule
Model & Dataset & $\rho$(r1, r4) & $\rho$(r1, r8) & $\rho$(r4, r8) \\
\midrule
GPT-2 Large  & DART   & 0.9514 & 0.9356 & 0.9552 \\
GPT-2 Large  & E2E    & 0.9189 & 0.8834 & 0.9475 \\
GPT-2 Large  & WebNLG & 0.8198 & 0.8381 & 0.9284 \\
\midrule
GPT-2 Medium & DART   & 0.7496 & 0.7948 & 0.9217 \\
GPT-2 Medium & E2E    & 0.5835 & 0.6922 & 0.8878 \\
GPT-2 Medium & WebNLG & 0.6957 & 0.7774 & 0.8757 \\
\bottomrule
\end{tabular}
\end{table}

\Cref{thm:layer_grad_residual_equivalence} suggests that a covariance-normalized gradient can serve as a meaningful signal beyond the raw gradient magnitude, capturing the predicted loss reduction achievable through a layer-restricted intervention. To operationalize this idea, we approximately measure projected residual for layer~$\ell$ as

\begin{equation}
\label{eq:metrics}
\widehat{\mathrm{Res}}_\ell
\;:=\;
\frac{\mathbb E\!\left[\|\nabla_{\theta_\ell} \mathcal L(x)\|_2\right]}
{\sqrt{\widehat{\sigma}_\ell}},
\quad
\widehat{\sigma}_\ell
=
\mathbb E\!\left[\|\phi_\ell(x)\|_2^2\right].
\end{equation}

While $\widehat{\mathrm{Res}}_\ell$ does not directly measure the output-space residual, it approximates the normalized gradient quantity $h_\ell^\top \Sigma_\ell^{-1} h_\ell$ derived in \Cref{thm:layer_grad_residual_equivalence}, and thus reflects the relative amount of loss-reducing signal accessible via layer~$\ell$.

Table~\ref{tab:rank_spearman} reports Spearman correlations between rankings of the projected-residual norm at increasing adapter ranks with GPT-2 \cite{radford19} over DART, E2E, and WebNLG \cite{hu2022lora}. For each rank $k$, layers are ordered by descending values of the projected-residual proxy $\widehat{\mathrm{Res}}_\ell$, and Spearman’s $\rho$ is computed as the Pearson correlation between the resulting layer-rank vectors across ranks. Across all models and datasets, correlations are consistently high, with $\rho(\mathrm{r4},\mathrm{r8})$ uniformly exceeding $\rho(\mathrm{r1},\mathrm{r4})$ and $\rho(\mathrm{r1},\mathrm{r8})$.

This monotone stabilization is consistent with the projected-residual viewpoint: increasing the adapter rank enlarges the realizable subspace of layer-local residual functions, enabling a closer approximation to the non-parametric space that contains the true residual. At low rank, the restricted residual subspace induces a projection error that obscures the relative magnitude of correctable bias across layers; as the rank increases, the projection more faithfully recovers the layer-wise residual, reducing this distortion and stabilizing the induced rankings. The observed trend therefore reflects convergence of the empirical projected-residual proxy toward the true layer-local correctable residual, supporting its use as a stable diagnostic when the residual subspace is sufficiently expressive.

\section{Activation variation and optimization hardness}
\label{sec:hardness-sigma}

In Section~\ref{sec:projected_residuals}, we showed that the layerwise resnorm depends on an inverse feature--covariance term. A large projected residual therefore indicates substantial bias that could be corrected by
adaptation, but it does not characterize how difficult such a correction is to
obtain by optimization. In this section, we show that when layer activations
exhibit weak input-dependent variation, the resulting optimization problem can be hard. We show this in linear adapter while the similar results of nonlinear adapter are in the appendix.

While the analysis below treats the adapter features $\Phi$ as given, it is useful to recall how such activation variation arises upstream in large language models. Residual connections aggregate information across layers, while normalization modules with learnable rescaling parameters (e.g., $\gamma$ in LayerNorm or RMSNorm) selectively amplify or suppress existing directions of variation. These mechanisms reshape how an input-dependent signal is expressed in the representation: directions that carry meaningful variation can be made more salient, while weak or redundant directions remain low-energy. 

Let $\Phi(x)\in\mathbb R^d$ denote a vector of features and let $r^\star(x)$ denote the target residual. Define the feature covariance and cross-correlation
\[
\Sigma := \mathbb E[\Phi(x)\Phi(x)^\top]\in\mathbb R^{d\times d},
\quad
c := \mathbb E[\Phi(x)\,r^\star(x)]\in\mathbb R^d,
\]
and assume $\Sigma\succ0$. For $\theta\in\mathbb R^d$, consider the squared-error objective $\mathcal R(\theta)=\tfrac12\,\mathbb E\big[(\theta^\top\Phi(x) - r^\star(x))^2\big]=\tfrac12\,\theta^\top\Sigma\theta - \theta^\top c + \tfrac12\,\mathbb E[(r^\star(x))^2].$

We write $\|v\|_{\Sigma} := \sqrt{v^\top \Sigma v}$ for the norm induced by $\Sigma$.

\begin{prop}[Spectral geometry governs noise amplification and budget hardness]
\label{prop:spectral-hardness}
The squared-loss risk
$\mathcal R(\theta)=\tfrac12\,\theta^\top\Sigma\theta-\theta^\top c+\textnormal{const}$
has the unique minimizer $\theta^\star=\Sigma^{-1}c$, and for all $\theta$,
$\mathcal R(\theta)-\mathcal R(\theta^\star)
=\tfrac12\|\theta-\theta^\star\|_\Sigma^2$ with
$\|\theta^\star\|_\Sigma^2=c^\top\Sigma^{-1}c$.
Writing $\Sigma=U\Lambda U^\top$ with eigenvalues $\lambda_i>0$ and
$\tilde c=U^\top c$, one has
$c^\top\Sigma^{-1}c=\sum_{i=1}^d \tilde c_i^2/\lambda_i$.

If $c$ is observed with additive noise $c+\zeta$ where
$\mathbb E[\zeta]=0$ and $\mathrm{Cov}(\zeta)=\Gamma$, and
$\widehat\theta^\star=\Sigma^{-1}(c+\zeta)$, then
$\mathbb E[\mathcal R(\widehat\theta^\star)-\mathcal R(\theta^\star)]
=\tfrac12\,\mathrm{tr}(\Gamma\Sigma^{-1})
=\tfrac12\sum_{i=1}^d \tilde\gamma_i/\lambda_i$,
where $\tilde\gamma_i$ are the diagonal entries of $U^\top\Gamma U$.

Moreover, for any $B>0$ and any $\theta$ with $\|\theta\|_2\le B$,
$\mathcal R(\theta)-\mathcal R(\theta^\star)
\ge
\tfrac12(\|\Sigma^{-1/2}c\|_2-\sqrt{\lambda_{\max}}\,B)_+^2$, where
$\|\Sigma^{-1/2}c\|_2^2=\sum_{i=1}^d \tilde c_i^2/\lambda_i$.
\end{prop}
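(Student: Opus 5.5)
The proof is elementary linear algebra, so I would verify the three claims in the order stated, using only $\Sigma\succ0$.

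\emph{Minimizer and excess risk.} Since $\nabla\mathcal R(\theta)=\Sigma\theta-c$ and $\nabla^2\mathcal R=\Sigma\succ0$, the risk is strictly convex with unique stationary point $\theta^\star=\Sigma^{-1}c$. The excess-risk identity comes from completing the square. Substitute $c=\Sigma\theta^\star$ to get
\[
\tfrac12\theta^\top\Sigma\theta-\theta^\top\Sigma\theta^\star+\tfrac12\theta^{\star\top}\Sigma\theta^\star
=\tfrac12\|\theta-\theta^\star\|_\Sigma^2 .
\]
This expression equals $\mathcal R(\theta)-\mathcal R(\theta^\star)$, because the constant terms cancel and $\mathcal R(\theta^\star)=\mathrm{const}-\tfrac12 c^\top\Sigma^{-1}c$. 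The same substitution gives $\|\theta^\star\|_\Sigma^2=c^\top\Sigma^{-1}\Sigma\Sigma^{-1}c=c^\top\Sigma^{-1}c$. Writing $\Sigma^{-1}=U\Lambda^{-1}U^\top$ then yields $\sum_i\tilde c_i^2/\lambda_i$.

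\emph{Noise term.} Here $\widehat\theta^\star-\theta^\star=\Sigma^{-1}\zeta$, so by the identity just proved the excess risk is $\tfrac12\zeta^\top\Sigma^{-1}\Sigma\Sigma^{-1}\zeta=\tfrac12\zeta^\top\Sigma^{-1}\zeta$. Taking expectations with the trace trick, and using $\mathbb E[\zeta]=0$ so that $\mathbb E[\zeta\zeta^\top]=\Gamma$, gives $\tfrac12\mathrm{tr}(\Sigma^{-1}\Gamma)$. Cyclicity turns this into $\mathrm{tr}(\Lambda^{-1}U^\top\Gamma U)=\sum_i\tilde\gamma_i/\lambda_i$, since only the diagonal of $U^\top\Gamma U$ survives.

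\emph{Budget lower bound.} This is the only step that needs any thought. I would write the excess risk as $\tfrac12\|\Sigma^{1/2}\theta-\Sigma^{1/2}\theta^\star\|_2^2$ and note that $\Sigma^{1/2}\theta^\star=\Sigma^{-1/2}c$. The reverse triangle inequality gives $\|\Sigma^{1/2}\theta-\Sigma^{-1/2}c\|_2\ge\|\Sigma^{-1/2}c\|_2-\|\Sigma^{1/2}\theta\|_2$. Next, $\|\Sigma^{1/2}\theta\|_2\le\|\Sigma^{1/2}\|_2\|\theta\|_2\le\sqrt{\lambda_{\max}}\,B$. The left side is nonnegative, so we may take the positive part before squaring; squaring is monotone on $[0,\infty)$, which gives the claim. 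The identity $\|\Sigma^{-1/2}c\|_2^2=c^\top\Sigma^{-1}c$ was already computed in the first part.

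None of these steps is a real obstacle. The only care needed is the positive-part step: the inequality $a\ge b-t$ with $a\ge0$ gives $a\ge(b-t)_+$, and squaring preserves the inequality only because both sides are nonnegative.
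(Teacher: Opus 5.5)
Your proposal is correct and follows essentially the same route as the paper's proof. It uses stationarity and completing the square, diagonalization of $\Sigma$, the trace identity $\mathbb E[\zeta^\top\Sigma^{-1}\zeta]=\mathrm{tr}(\Sigma^{-1}\Gamma)$ for the noise term, and the reverse triangle inequality in the $\Sigma$-norm with $\|\theta\|_\Sigma\le\sqrt{\lambda_{\max}}\,B$ for the budget bound. Your proof also spells out two details the paper leaves implicit: uniqueness via strict convexity, and taking the positive part before squaring.
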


To quantify how favorable a layer’s representation is for optimization, we
introduce a scalar summary of the layerwise feature geometry. Let $\Sigma$ denote
the covariance of the adapter features $\Phi(x)$. We define the activation energy
as
\begin{align*}
E_{\mathrm{act}}(\Sigma)
&:=
\frac{1}{d}\,\mathrm{tr}(\Sigma)
=
\frac{1}{d}\sum_{i=1}^d \lambda_i
=
\frac{1}{d}\,\mathbb E \|\Phi(x)\|_2^2 .
\end{align*}

In practice, this activation energy is estimated at layer~$\ell$ by $\hat{\sigma}_\ell$ from \eqref{eq:metrics}. Since $E_{\mathrm{act}}(\Sigma)$ is the average eigenvalue of the feature covariance,
low activation energy necessarily implies the presence of small eigenvalues. \Cref{prop:spectral-hardness} shows that such small eigenvalues play a central role in optimization: inverse--eigenvalue terms dominate both estimation error and norm-constrained risk reduction. Consequently, layers with weak activation variation induce ill-conditioned feature geometry, making them
intrinsically harder to tune even when a substantial projected residual signal is present.

\begin{figure}[t]
  \centering
  \includegraphics[width=\linewidth]{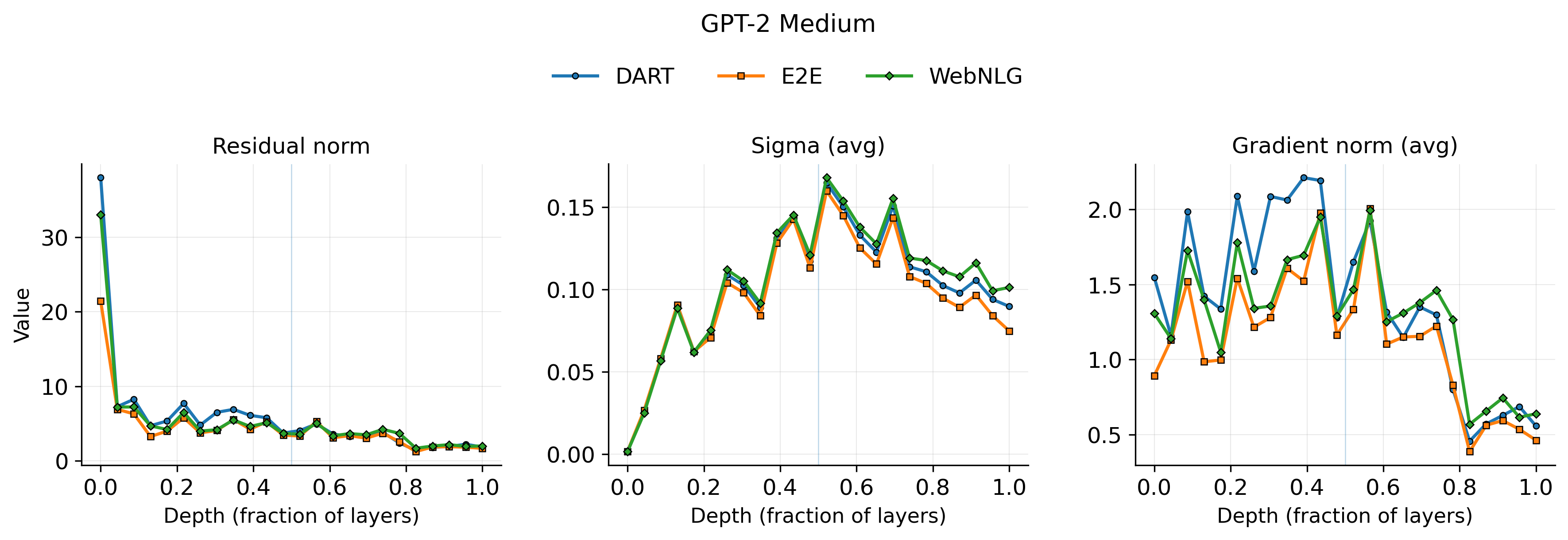}

  \vspace{0.6em}

  \includegraphics[width=\linewidth]{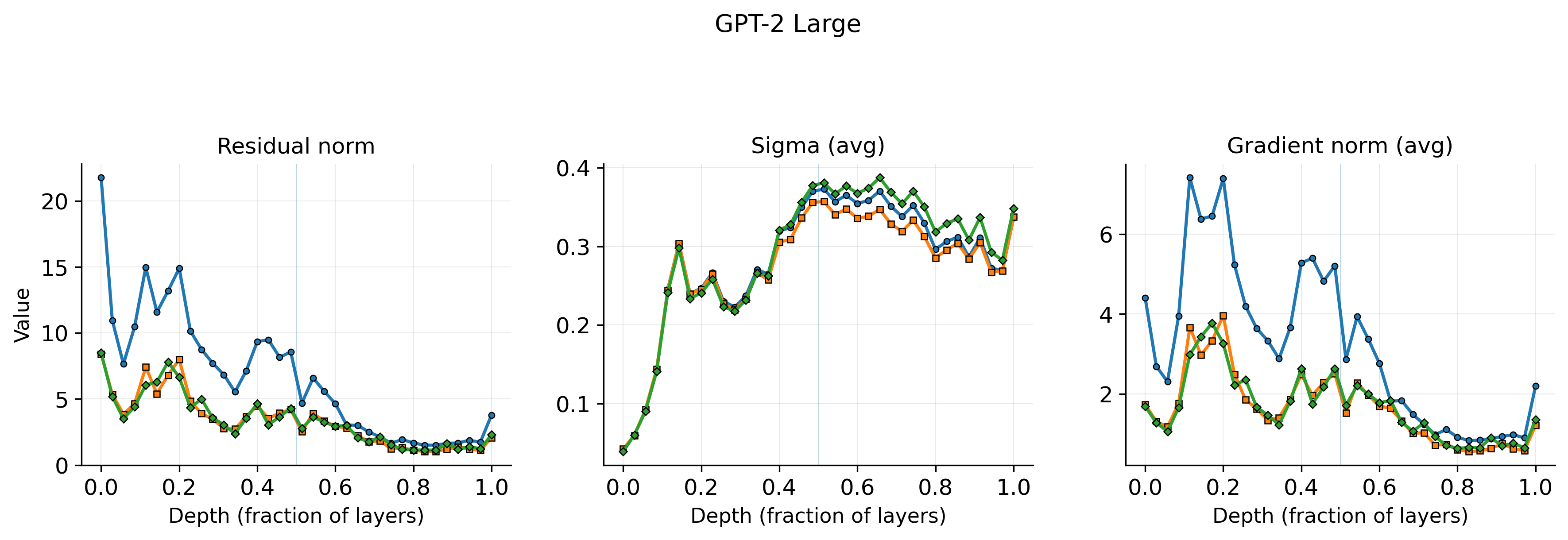}

  \caption{
  Layerwise profiles of projected residual norm, activation energy, and gradient
  norm across DART, E2E, and WebNLG.
  Top: GPT-2 Medium. Bottom: GPT-2 Large.
  }
  \label{fig:gpt2_layer_profiles}
\end{figure}

Figure~\ref{fig:gpt2_layer_profiles} reveals a consistent layerwise trade-off across datasets and model scales.
The resnorm decreases monotonically with depth, while the average activation energy ($\sigma_l$) increases. Early layers therefore admit larger potential bias correction but exhibit weak activation variation, whereas deeper layers are easier to optimize but provide less correctable residual signal. This trade-off holds uniformly across DART, E2E, and WebNLG for both GPT-2 Medium and Large, suggesting that layerwise adaptation properties can transfer across similar tasks.

\begin{figure}[t]
  \centering
  \begin{subfigure}{0.49\linewidth}
    \centering
    \includegraphics[width=\linewidth]{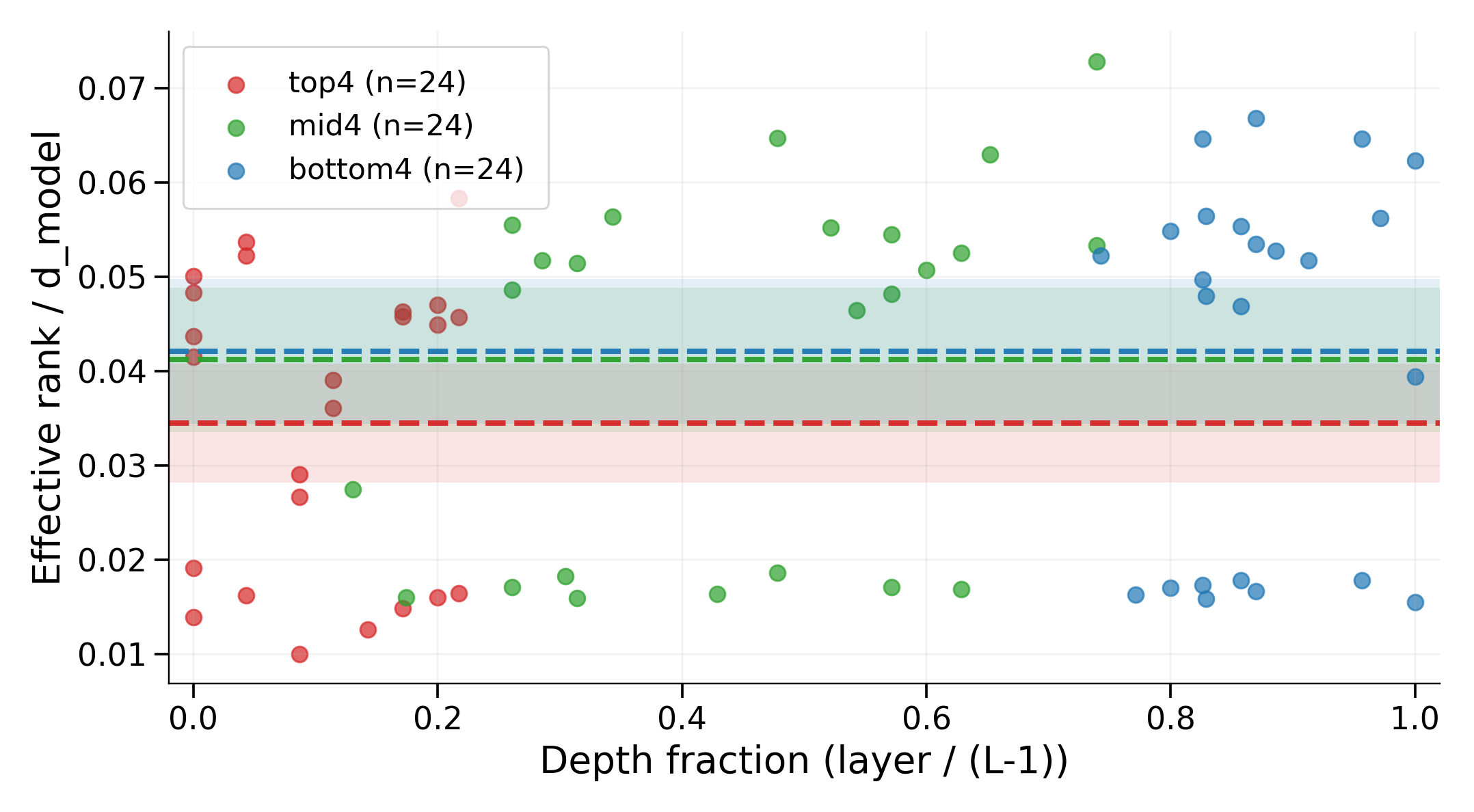}
    \caption{Effective rank across depth}
  \end{subfigure}
  \hfill
  \begin{subfigure}{0.49\linewidth}
    \centering
    \includegraphics[width=\linewidth]{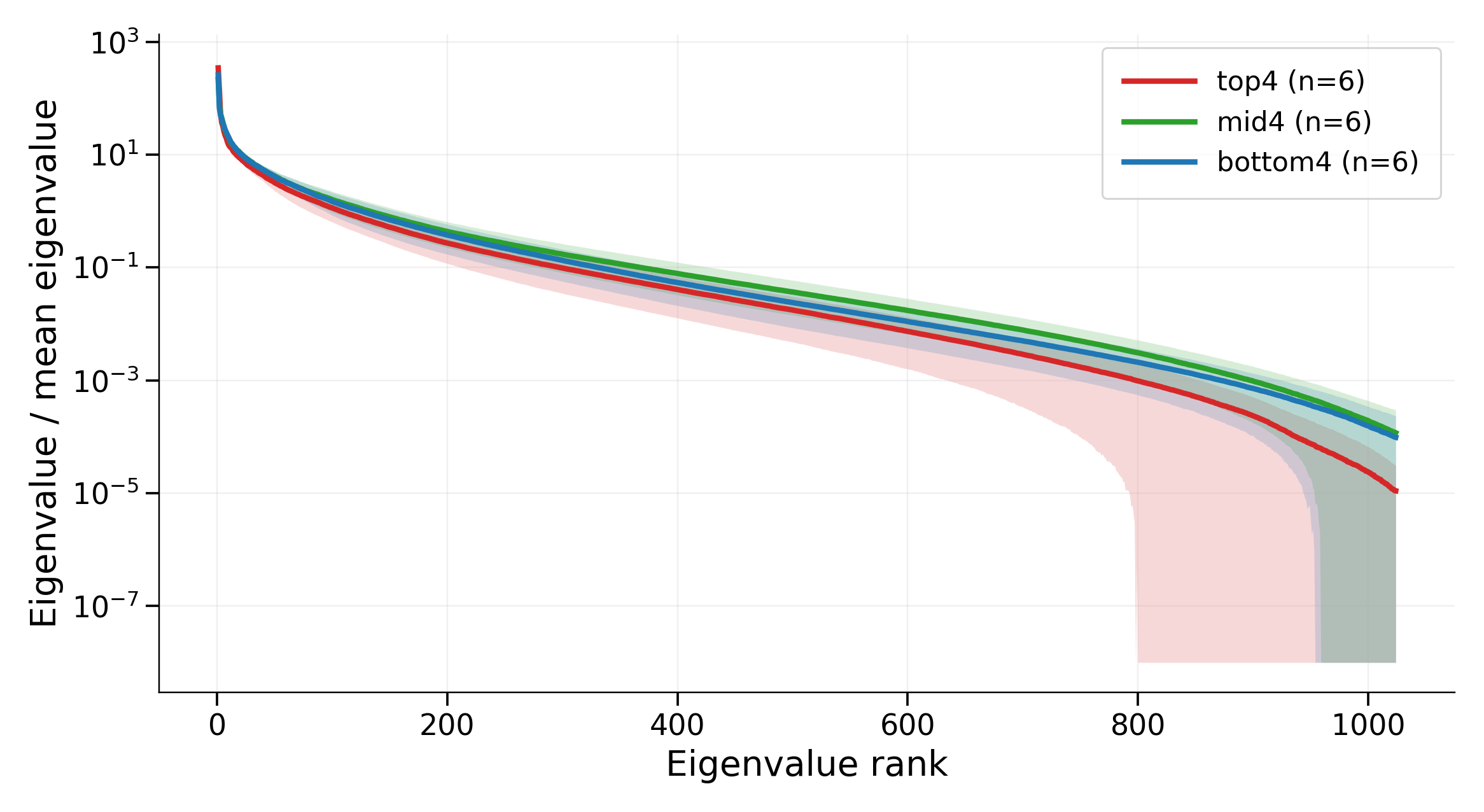}
    \caption{Conditioning spectrum}
  \end{subfigure}

  \caption{Conditioning of layerwise representations under different adapter placement regimes.}
  \label{fig:conditioning}
\end{figure}

Figure~\ref{fig:conditioning} provides empirical evidence for the optimization-hardness mechanism. Layers are grouped into top-, mid-, and bottom-resnorm regimes. The left panel reports the effective rank of layerwise feature covariances, which captures the effective dimensionality of the representation by measuring how variance is distributed across feature directions \cite{roy07}. The right panel shows aggregated eigenvalue spectra normalized by mean activation energy, reflecting both the overall scale of feature variation and the sharpness of spectral decay. Across models and datasets, top-residual layers exhibit the lowest effective rank and the steepest normalized spectral decay, indicating that their activations concentrate along a small number of directions and behave as effectively low-dimensional, ill-conditioned feature spaces. In contrast, bottom- and mid-resnorm layers are substantially better conditioned. Together, low activation energy and low effective rank reliably signal optimization difficulty.

\section{Coupling and layer performance}

\label{sec:coupling}

\subsection{Uniform allocation}
\label{subsec:uniform_allocation}

When layers are weakly coupled, projected-residual signal and optimization difficulty combine approximately additively, and layers with large residuals and favorable conditioning are natural adaptation targets. In the coupled regime, this additivity breaks down: layerwise contributions interact, allowing some layers’ effects to be partially compensated by updates to others and reducing the guaranteed gain from adaptation. The following proposition formalizes this effect by characterizing which layers can be compensated through cross-layer interactions and which incur an irreducible loss when frozen, thereby motivating layer selections that avoid strong mutual compensation.

\begin{prop}[Selective compensation under layer interactions]
\label{cor:layerwise_selective_main}
Let $Q(\theta)=Q_0+g^\top\theta+\tfrac12\,\theta^\top H\theta$ with $H\succ0$.
Fix layer $\ell$ and write $R=\{1,\dots,L\}\setminus\{\ell\}$.
Partition $H$ and $g$ as
$H=\begin{pmatrix}H_{\ell\ell}&H_{\ell R}\\H_{R\ell}&H_{RR}\end{pmatrix}$
and $g=(g_\ell,g_R)$.
Define the adjusted Hessian
$H_{\ell\mid R}=H_{\ell\ell}-H_{\ell R}H_{RR}^{-1}H_{R\ell}\succ0$
and the adjusted gradient
$\tilde g_\ell=g_\ell-H_{\ell R}H_{RR}^{-1}g_R$.
Let $Q^\star=\min_\theta Q(\theta)$ and
$Q^{(-\ell)}=\min_{\theta_R}Q(0,\theta_R)$.

Define
$w_\ell=H_{\ell\ell}^{-1/2}\tilde g_\ell$,
$B_\ell=H_{\ell\ell}^{-1/2}H_{\ell R}H_{RR}^{-1/2}$,
and $\kappa_\ell=\|B_\ell\|_2^2\in[0,1)$.
Then
\begin{equation}
\label{eq:freeze_sandwich_main}
\frac12\,\|w_\ell\|_2^2
\;\le\;
Q^{(-\ell)}-Q^\star
\;\le\;
\frac{1}{2(1-\kappa_\ell)}\,\|w_\ell\|_2^2.
\end{equation}

Moreover, writing $u_\ell=H_{\ell\ell}^{-1/2}g_\ell$ and
$C_\ell=\sqrt{\kappa_\ell}\,\|H_{RR}^{-1/2}g_R\|_2$, define
$s_\ell=\max\{\|u_\ell\|_2-C_\ell,0\}$.
Then
\[
Q^{(-\ell)}-Q^\star\ge \tfrac12\,s_\ell^2.
\]
\end{prop}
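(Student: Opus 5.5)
The plan is to compute $Q^{(-\ell)}-Q^\star$ in closed form by block elimination, and then bound the resulting quadratic form using the normalized coupling $B_\ell$.

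\textbf{Step 1: exact gap.} Minimizing $Q$ over $\theta_R$ for fixed $\theta_\ell$ gives $\theta_R=-H_{RR}^{-1}(g_R+H_{R\ell}\theta_\ell)$. Substituting yields the profiled objective
\[
\min_{\theta_R}Q(\theta_\ell,\theta_R)=Q^{(-\ell)}+\tilde g_\ell^\top\theta_\ell+\tfrac12\theta_\ell^\top H_{\ell\mid R}\theta_\ell,
\]
with $Q^{(-\ell)}=Q_0-\tfrac12 g_R^\top H_{RR}^{-1}g_R$. The Schur complement $H_{\ell\mid R}$ is positive definite because $H\succ0$. Minimizing over $\theta_\ell$ then gives
\[
Q^{(-\ell)}-Q^\star=\tfrac12\,\tilde g_\ell^\top H_{\ell\mid R}^{-1}\tilde g_\ell .
\]

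\textbf{Step 2: normalize.} Factor
\[
H_{\ell\mid R}=H_{\ell\ell}^{1/2}(I-B_\ell B_\ell^\top)H_{\ell\ell}^{1/2}.
\]
This gives
\[
Q^{(-\ell)}-Q^\star=\tfrac12\,w_\ell^\top (I-B_\ell B_\ell^\top)^{-1}w_\ell .
\]
To see that $\kappa_\ell<1$, note that $H\succ0$ is equivalent to $\operatorname{diag}(H_{\ell\ell}^{-1/2},H_{RR}^{-1/2})\,H\,\operatorname{diag}(\cdot)=\begin{pmatrix}I&B_\ell\\B_\ell^\top&I\end{pmatrix}\succ0$. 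That in turn holds iff $\|B_\ell\|_2<1$. The eigenvalues of $I-B_\ell B_\ell^\top$ then lie in $[1-\kappa_\ell,1]$, so those of its inverse lie in $[1,1/(1-\kappa_\ell)]$. This gives the sandwich \eqref{eq:freeze_sandwich_main} immediately.

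\textbf{Step 3: second bound.} Write
\[
w_\ell=u_\ell-B_\ell\,H_{RR}^{-1/2}g_R .
\]
This holds because $H_{\ell\ell}^{-1/2}H_{\ell R}H_{RR}^{-1}g_R=B_\ell H_{RR}^{-1/2}g_R$. The reverse triangle inequality then gives
\[
\|w_\ell\|_2\ge\|u_\ell\|_2-\|B_\ell\|_2\|H_{RR}^{-1/2}g_R\|_2=\|u_\ell\|_2-C_\ell .
\]
Since $\|w_\ell\|_2\ge0$, we get $\|w_\ell\|_2\ge s_\ell$. Combining with the lower half of \eqref{eq:freeze_sandwich_main} gives $Q^{(-\ell)}-Q^\star\ge\tfrac12 s_\ell^2$.

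One point needs care. $C_\ell$ is defined with $\sqrt{\kappa_\ell}=\|B_\ell\|_2$, which matches exactly, so no slack is needed.

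\textbf{Main obstacle.} The only nonroutine step is Step 2's factorization and the claim that $\kappa_\ell\in[0,1)$. I would handle both via the congruence argument above: positive definiteness of $\begin{pmatrix}I&B\\B^\top&I\end{pmatrix}$ is equivalent to $I-BB^\top\succ0$, by the Schur complement.
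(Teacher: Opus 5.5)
Your proof is correct and follows the paper's argument essentially step for step. Like the paper, you use block elimination to get $Q^{(-\ell)}-Q^\star=\tfrac12\tilde g_\ell^\top H_{\ell\mid R}^{-1}\tilde g_\ell$, then the factorization $H_{\ell\mid R}=H_{\ell\ell}^{1/2}(I-B_\ell B_\ell^\top)H_{\ell\ell}^{1/2}$ with eigenvalues of $I-B_\ell B_\ell^\top$ in $[1-\kappa_\ell,1]$, and finally $w_\ell=u_\ell-B_\ell H_{RR}^{-1/2}g_R$ with the reverse triangle inequality. Your congruence argument deriving $\kappa_\ell<1$ from $H\succ0$ is a small addition, since the paper simply takes $\kappa_\ell\in[0,1)$ as given.
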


The quantities $u_\ell$ and $C_\ell$ determine whether a layer’s effect can be compensated by adjustments to other layers. When $\|u_\ell\|_2 \le C_\ell$, freezing layer $\ell$ incurs little loss, as its contribution can be absorbed by the remaining layers; when $\|u_\ell\|_2 > C_\ell$, freezing $\ell$ provably incurs a nontrivial penalty. The strength of this effect is governed by the coupling parameter $\kappa_\ell$, which captures curvature-normalized inter-layer interaction. Larger $\kappa_\ell$ increases the potential for compensation, but when compensation is
incomplete, it also amplifies the residual freezing penalty. Consequently, layers whose contributions cannot be absorbed are especially critical to adapt.

Coupling provides a principled explanation for why distributing adapters across depth can outperform concentrated placement. Let $g_\ell$ and $H_{\ell\ell}$ denote the layerwise blocks of the gradient and Hessian at the frozen model, respectively. Under a decoupled assumption, the additive proxy
\[
\Delta_{\mathrm{add}}(S)=\tfrac12\sum_{\ell\in S} g_\ell^\top H_{\ell\ell}^{-1}g_\ell
\]
approximates the gain from adapting a set of layers $S$. In the presence of cross-layer coupling, the realized gain $\Delta(S)$ instead depends on the full block Hessian $H_{SS}$ and can deviate from this proxy. The following theorem shows that this deviation is governed by a set-level coupling parameter $\rho_S$, and that when curvature-normalized interactions decay with depth distance, spreading tuned layers yields a tighter lower bound on achievable performance than concentrating them in a contiguous block.

\begin{thm}[Spreading tuned layers improves quadratic gain by reducing coupling]
\label{thm:spread_beats_adjacent}
Let $Q(\theta)=Q(0)+g^\top\theta+\tfrac12\,\theta^\top H\theta$ with $H\succ0$ and
layer-partitioned parameters $\theta=(\theta_1,\dots,\theta_L)$.
For any tuned set $S\subseteq\{1,\dots,L\}$, define the restricted optimum
$\theta_S^\star\in\arg\min_{\theta:\,\theta_{S^c}=0}Q(\theta)$ and the corresponding
gain $\Delta(S)=Q(0)-Q(\theta_S^\star)$.
Then
\begin{equation}
\label{eq:Delta_exact_main}
\Delta(S)=\tfrac12\, g_S^\top H_{SS}^{-1} g_S.
\end{equation}

Write $H_{SS}=D_S+E_S$ with
$D_S=\mathrm{diag}(H_{\ell\ell})_{\ell\in S}$ and
$E_S=H_{SS}-D_S$, and define
$M_S=D_S^{-1/2}E_S D_S^{-1/2}$ and $\rho_S=\|M_S\|_2<1$.
The additive proxy
$\Delta_{\mathrm{add}}(S)=\tfrac12\, g_S^\top D_S^{-1} g_S
=\tfrac12\sum_{\ell\in S} g_\ell^\top H_{\ell\ell}^{-1}g_\ell$
satisfies
\begin{equation}
\label{eq:Delta_sandwich_main}
\frac{1}{1+\rho_S}\,\Delta_{\mathrm{add}}(S)
\;\le\;
\Delta(S)
% \;\le\;
% \frac{1}{1-\rho_S}\,\Delta_{\mathrm{add}}(S).
\end{equation}
\end{thm}

Equation~\eqref{eq:Delta_sandwich_main} shows that $\rho_S$ governs the departure
from additivity: as $\rho_S$ increases, the guaranteed fraction of
$\Delta_{\mathrm{add}}(S)$ decreases. When curvature-normalized interactions decay
with depth distance, increasing the separation between tuned layers reduces
$\rho_S$ and moves the configuration toward a stable, near-additive regime.
Consequently, for a fixed number of tuned layers, distributing adapters across
depth preserves a larger guaranteed fraction of the achievable gain than
concentrating them in a contiguous block, providing a formal justification for
uniform allocation.

\subsection{Layerwise performance: a case study}
\label{subsec:layerwise_case_study}

\begin{table}[t]
\centering
\small
\caption{CIDEr scores across adapter placement strategies. Best performance per dataset is highlighted.}
\label{tab:cider_methods}

\vspace{0.4em}
\begin{tabular}{@{}lccc|ccc@{}}
\toprule
& \multicolumn{3}{c}{GPT-2 Medium} & \multicolumn{3}{c}{GPT-2 Large} \\
\cmidrule(lr){2-4} \cmidrule(lr){5-7}
Method & DART & E2E & WebNLG & DART & E2E & WebNLG \\
\midrule
Random-4   & 2.181 & 1.960 & 2.574 & 2.515 & 2.338 & 3.253 \\
Uniform-4  & 2.335 & \textbf{2.263} & 2.752 & \textbf{2.548} & 2.332 & 3.110 \\
Bottom-4   & 1.495 & 1.655 & 1.354 & 1.824 & 1.992 & 1.893 \\
Mid-4      & \textbf{2.408} & 2.164 & \textbf{2.856} & 2.531 & \textbf{2.380} & \textbf{3.288} \\
Top-4      & 2.012 & 1.735 & 2.617 & 2.399 & 2.199 & 2.977 \\
\bottomrule
\end{tabular}
\end{table}

Table~\ref{tab:cider_methods} reports CIDEr scores \cite{vedantam2015cider} for GPT-2 Medium and Large under different adapter placement strategies. Across all six model--dataset pairs, the same qualitative pattern emerges: selecting middle--resnorm layers consistently outperforms random placement and achieves the best performance in four cases, while uniform allocation performs best in the remaining two. In contrast, top- and bottom-resnorm placements are consistently worse.

The consistency of this pattern across model size and datasets suggests that GPT-2 operates in a regime where cross-layer coupling is likely weak, allowing layers to be treated as approximately decoupled for adapter placement. In this setting, performance reflects a trade-off between projected residual magnitude and optimization hardness. Layers with the largest residuals appear more difficult to optimize, while middle-residual layers strike a more favorable balance between signal strength and conditioning. Uniform allocation performs robustly across all settings, likely because distributing adapters across depth reduces sensitivity to interaction effects and makes selected layers more indispensable.

\section{Fine-tuning cost depends on depth}
\label{sec:cost_depth}

\begin{figure}[t!]
  \centering
  \includegraphics[width=0.75\linewidth]{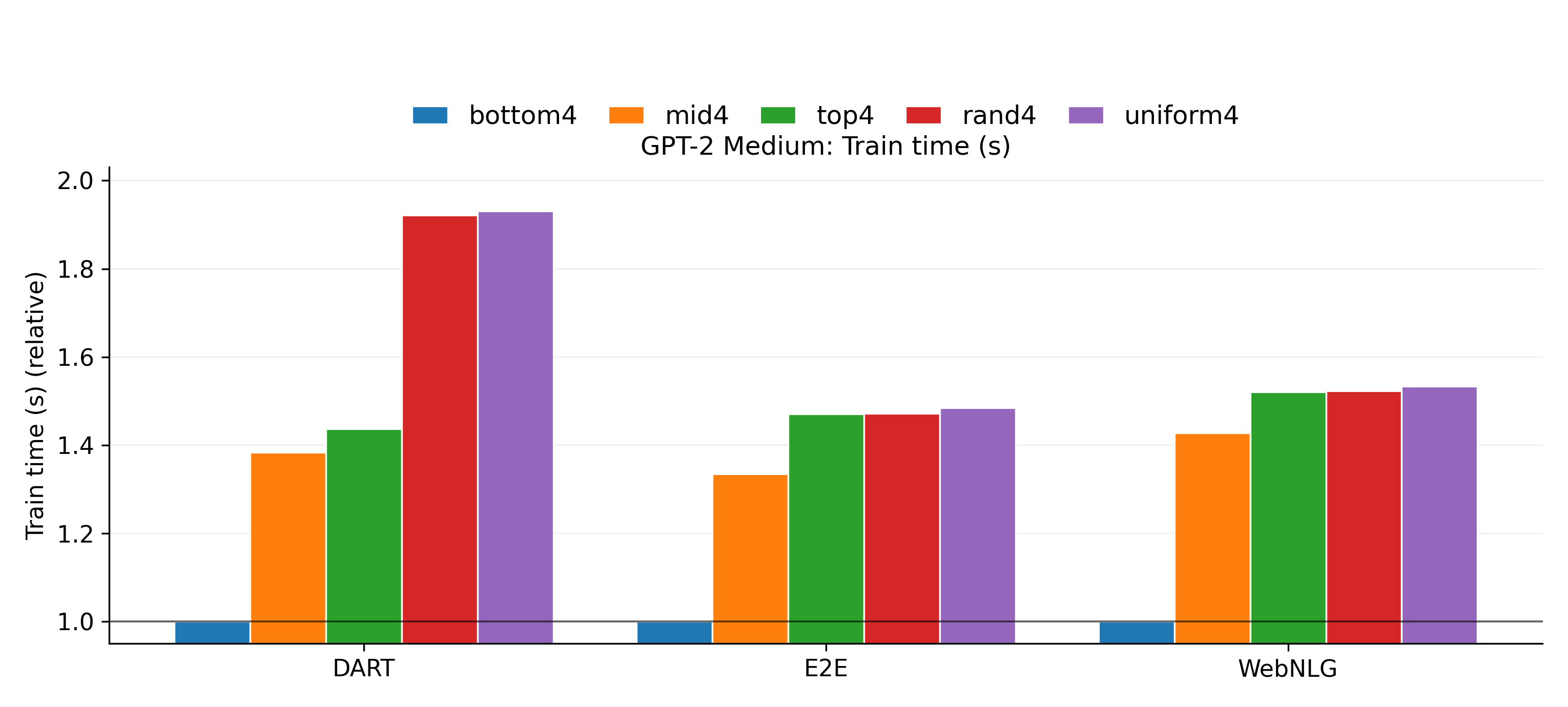}

  \vspace{0.4em}

  \includegraphics[width=0.75\linewidth]{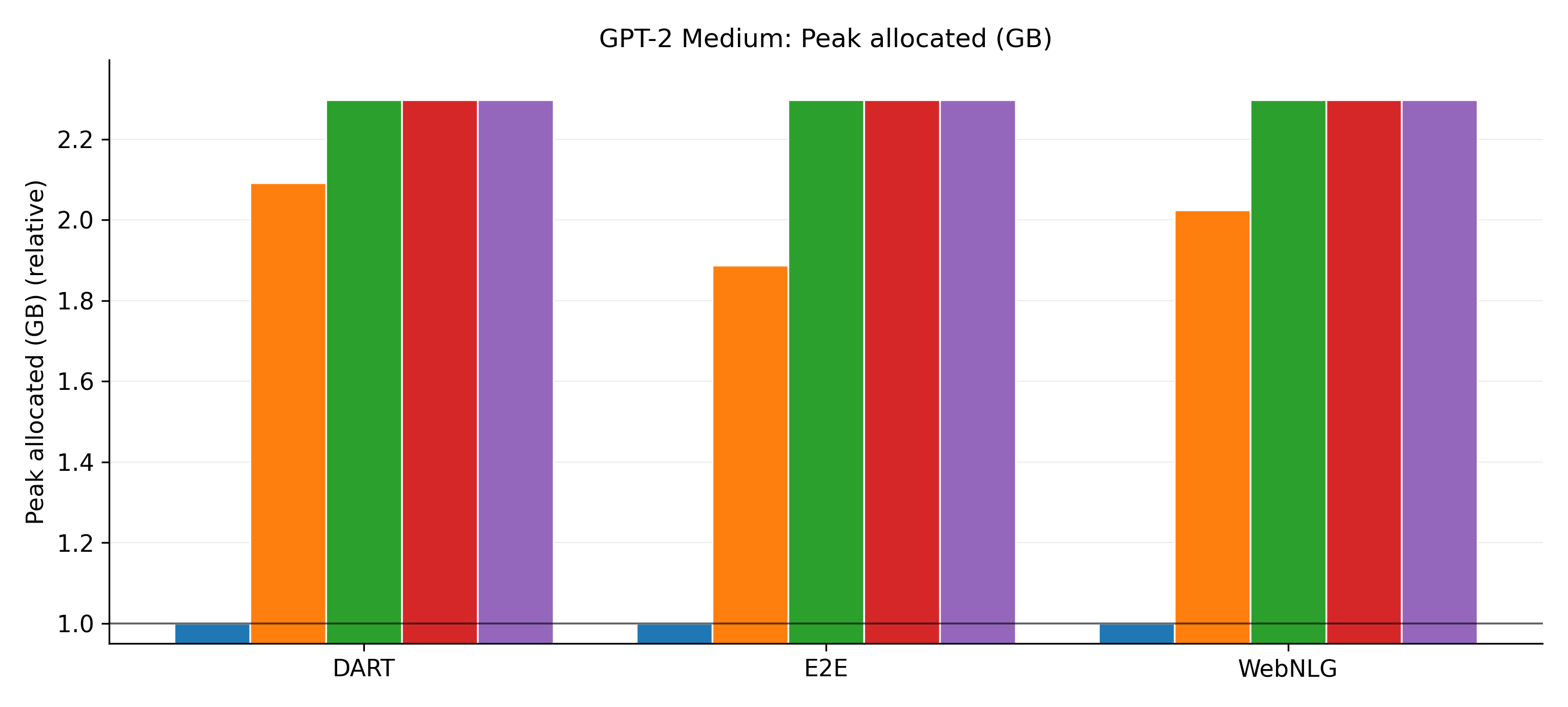}

  \vspace{0.4em}

  \includegraphics[width=0.75\linewidth]{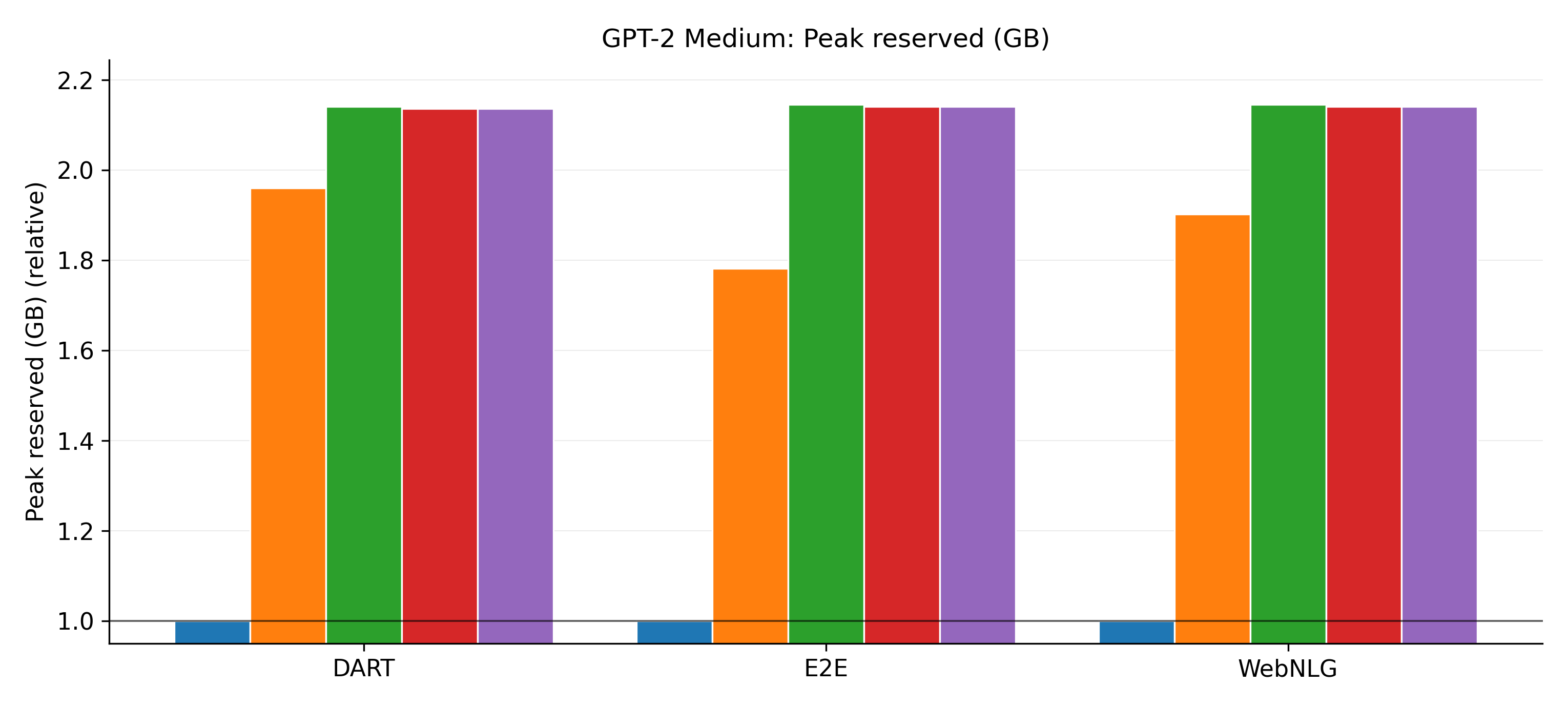}

  \caption{
  Relative training time and memory usage for GPT-2 Medium across datasets and adapter placements; all configurations use identical adapter sizes.
  }
  \label{fig:gpt2_medium_profile}
\end{figure}

Parameter-efficient fine-tuning is often characterized by the number of trainable parameters. Figure~\ref{fig:gpt2_medium_profile} shows that adapter placement is an equally important determinant of fine-tuning cost. Even when the adapter size and parameter count are fixed, different placement strategies induce large and systematic differences in training time and memory usage. 

Notably, resnorm naturally stratifies layers into coarse depth regimes, as task-correctable residual signal decays with depth in deep transformers. Across all three datasets, bottom-resnorm adapters are consistently the cheapest configuration, while top-resnorm adapters are the most expensive, with mid-resnorm placements lying in between. For GPT-2 Medium, bottom-resnorm tuning reduces training time by roughly 30--34\% and peak memory by about 56\% relative to top-resnorm tuning, while mid-resnorm placement yields more modest savings of approximately 4--9\% in training time and 9--18\% in peak memory. \Cref{fig:gpt2_large_profile} in Appendix shows the same qualitative pattern for GPT-2 Large.

These differences arise from the structure of backpropagation rather than parameter count. Adapting layers close to output shortens gradient paths and reduces the portion of the network for which activations must be retained, whereas adapting layers close to input requires propagating gradients through a larger fraction of the
model. As a result, adapter location can dominate runtime and memory cost, motivating explicit layer-aware trade-offs in parameter-efficient fine-tuning.

\section{Layer card}

% \begin{table}[t]
% \centering
% \caption{Performance comparison across different layer placement strategies. Each setting selects 5 layers.}
% \label{tab:layer_comparison}
% \begin{adjustbox}{max width=0.5\textwidth}
% \vspace{0.6em}
% \begin{tabular}{lccccc}
% \toprule
% \multicolumn{6}{c}{Llama2-7B} \\
% \midrule
% Setting & HS & MQA & MMQA & GSM8K & SVAMP \\
% \midrule
% Bottom-5  & 85.0329 & 22.9146 & 34.7991 & 7.9606 & 00.00 \\
% Mid-5     & 88.5381 & 23.3166 & 33.8893 & 8.7945 & 00.00 \\
% Top-5     & 87.6319 & 22.6131 & 34.3442 & 7.8848 & 00.00 \\
% Uniform-5 & 00.00 & 00.00 & 00.00 & 00.00 & 00.00 \\
% \bottomrule
% \end{tabular}
% \end{adjustbox}
% \vspace{1.5em}

% \begin{adjustbox}{max width=0.5\textwidth}
% \begin{tabular}{lccccc}
% \toprule
% \multicolumn{6}{c}{Qwen3-8B} \\
% \midrule
% Setting & HS & MQA & MMQA & GSM8K & SVAMP \\
% \midrule
% Bottom-5  & 74.3179& 50.4188 & 00.00 & 61.2585 & 00.00 \\
% Mid-5     & 70.5935 & 53.3668 & 00.00 & 69.9773 & 00.00 \\
% Top-5     & 79.3766 & 53.1993 & 00.00 & 75.815 & 00.00 \\
% Uniform-5 & 00.00 & 00.00 & 00.00 & 00.00 & 00.00 \\
% \bottomrule
% \end{tabular}
% \end{adjustbox}
% \end{table}

\begin{table}[t]
\centering
\caption{Performance comparison across LoRA layer placement strategies; each setting selects 5 layers unless noted. Best 5-layer performance per dataset is highlighted.}
\label{tab:lora_layer_comparison}

\vspace{0.4em}
\resizebox{\columnwidth}{!}{
\begin{tabular}{lcccc|cccc}
\toprule
& \multicolumn{4}{c}{LLaMA2-7B} & \multicolumn{4}{c}{Qwen3-8B} \\
\cmidrule(lr){2-5} \cmidrule(lr){6-9}
Setting 
& GSM8K & HS & SVAMP & MathQA 
& GSM8K & HS & SVAMP & MathQA \\
\midrule
Bottom-5    
& 7.9606 & 85.0329 & \textbf{10.6667} & 22.9146 
& 61.2585 & 74.3179 & 53.6667 & 50.4188 \\
Mid-5     
& \textbf{8.7945} & \textbf{88.5381} & 7.0000 & \textbf{23.3166} 
& 69.9773 & 70.5935 & 53.0000 & 53.3668 \\
Top-5     
& 7.8848 & 87.6319 & 5.0000 & 22.6131 
& \textbf{75.8150} & \textbf{79.3766} & 53.6667 & 53.1993 \\
Uniform-5 
& 8.6429 & 88.3390 & 8.6667 & 24.2881 
& 68.3851 & 79.0679 & \textbf{54.3333} & \textbf{54.6734} \\
\midrule
All-layer  
& 20.1668 & 92.2824 & 39.3333 & 25.8291 
& 70.3563 & 89.2950 & 59.3333 & 55.7454 \\
\bottomrule
\end{tabular}
}
\end{table}

\begin{table}[t]
\centering
\caption{Training throughput (steps/s) across LoRA layer placements; higher is better. Best 5-layer performance per dataset is highlighted.}
\label{tab:lora_throughput}

\vspace{0.4em}
\resizebox{\columnwidth}{!}{
\begin{tabular}{lcccc|cccc}
\toprule
& \multicolumn{4}{c}{LLaMA2-7B} & \multicolumn{4}{c}{Qwen3-8B} \\
\cmidrule(lr){2-5} \cmidrule(lr){6-9}
Setting 
& GSM8K & HS & SVAMP & MathQA 
& GSM8K & HS & SVAMP & MathQA \\
\midrule
Last-5    
& \textbf{0.8766} & \textbf{0.8392} & 1.0029 & \textbf{0.8327} 
& \textbf{0.7629} & \textbf{0.6591} & \textbf{0.8610} & \textbf{0.7400} \\
Mid-5     
& 0.8680 & 0.7486 & 1.0127 & 0.8223 
& 0.6890 & 0.6012 & 0.8476 & 0.6844 \\
Top-5     
& 0.7768 & 0.6964 & \textbf{1.0409} & 0.7670 
& 0.6734 & 0.5598 & 0.7644 & 0.5865 \\
\midrule
All-layer  
& 0.6029 & 0.5688 & 0.6730 & 0.5379 
& 0.4660 & 0.4252 & 0.5389 & 0.4235 \\
\bottomrule
\end{tabular}
}
\end{table}

Building on the projected residual framework, we introduce the Layer Card, a systematic diagnostic that records layerwise resnorm, activation conditioning, cost, and performance, enabling principled and practical layer selection for PEFT (Algorithm~\ref{alg:layer-card}, Appendix).

We first apply the task-transfer approach, in which reference datasets serve as metadata, to the GPT-2 family by constructing Layer Cards from two reference tasks (DART and E2E) and evaluating transfer to WebNLG. Both reference tasks exhibit consistent layerwise structure: mid–resnorm layers dominate performance, while bottom-resnorm layers incur the lowest computational cost, making layer selection straightforward. Constructing Layer Cards is negligible relative to full fine-tuning (1.35\,s and 4.1\,GB peak memory vs.\ $\sim$20k\,s and 29\,GB for LoRA fine-tuning), allowing reuse across tasks. Guided by these Layer Cards, mid-resnorm placement improves WebNLG performance by 111\% (GPT-2 Medium) and 74\% (GPT-2 Large) over bottom-resnorm placement under equal budgets, whereas bottom-resnorm placement prioritizes efficiency, reducing training time by 34.3\% and 34.0\% and peak memory by $2.30\times$ and $2.31\times$, respectively, compared to top-resnorm adaptation.

\begin{figure}[t!]
  \centering
  \includegraphics[width=0.65\linewidth]{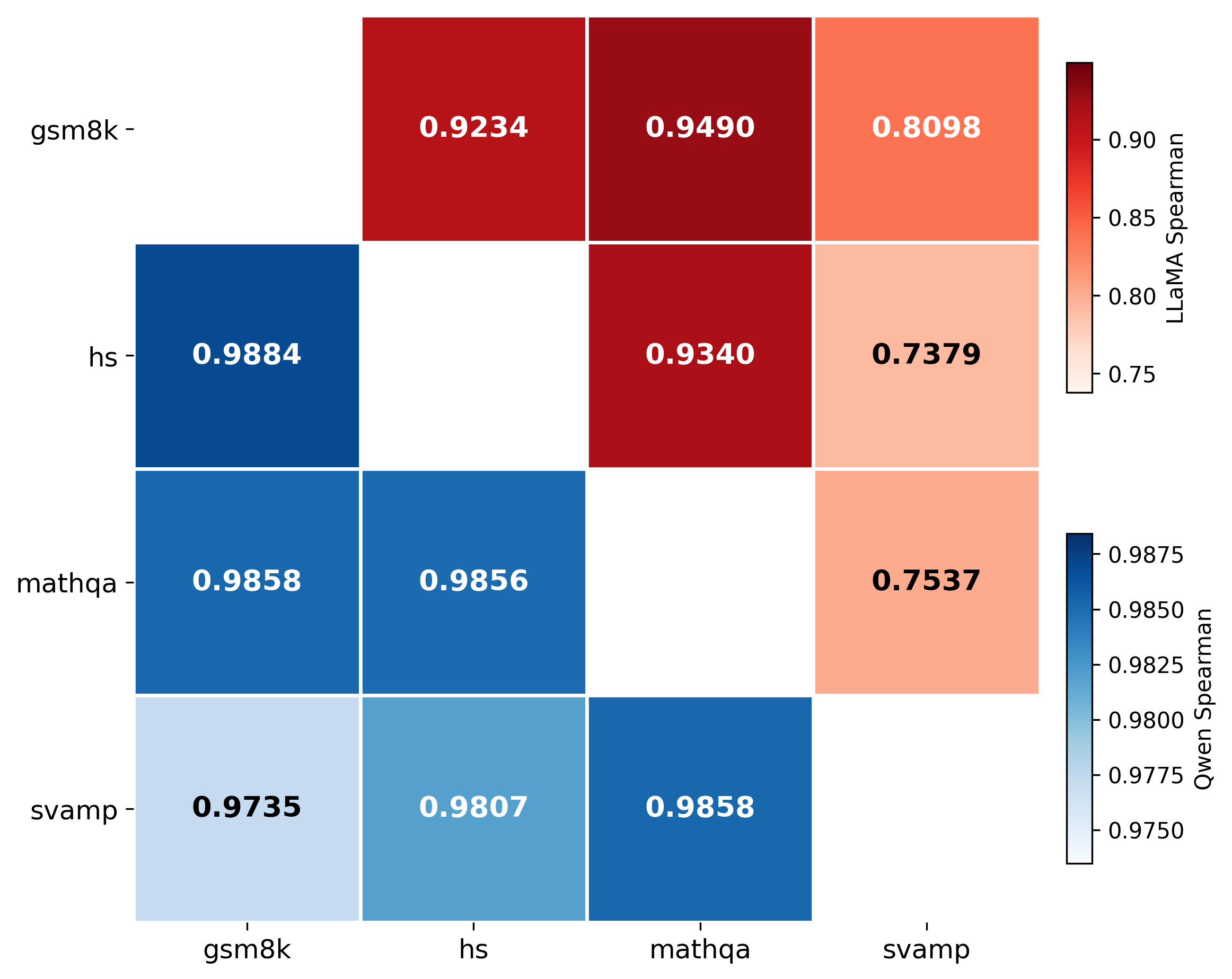}
  \caption{
    Task-wise Spearman correlation of projected residual norm rankings.
    Upper triangle: LLaMA; lower triangle: Qwen.
  }
  \label{fig:task-spearman}
\end{figure}

For larger-scale architectures such as Llama2-7B \cite{touvron2023llama} and Qwen3-8B \cite{qwen3technicalreport} evaluated across four datasets: HS \cite{zellers2019hellaswag}, MathQA \cite{amini-etal-2019-mathqa}, GSM8K \cite{cobbe2021gsm8k}, and SVAMP \cite{patel-etal-2021-nlp}. We show that the layer cards transferability depends over dataset, and thus we use the spearman correlation of resnorm rank across layer to detect non-transferable case. As shown in \Cref{tab:lora_layer_comparison}, for LLaMA2-7B, performance is largely insensitive to layer placement across datasets, exhibiting a flat profile among bottom-, mid-, and top-resnorm adaptation, whereas SVAMP deviates substantially, with large performance differences across placements. This non-transferable behavior is accurately anticipated by its low Spearman correlation with other tasks in \Cref{fig:task-spearman}. In contrast, Qwen3-8B exhibits consistently high task-wise Spearman correlations, with GSM8K and HS showing the strongest alignment and correspondingly benefiting most from top-layer adaptation. SVAMP also maintains a higher correlation with other tasks on Qwen and behaves comparably to the rest, showing no anomalous sensitivity. Across both models, Uniform-5 placement emerges as a robust default when task similarity is uncertain. The fine-tuning throughput results in \Cref{tab:lora_throughput} further confirm that residual-norm–stratified layers induce stratified training costs, with bottom-resnorm placement consistently yielding the lowest computational overhead. We further observe that raw gradient norms do not reliably indicate the layer regimes associated with the best downstream performance. For LLaMA, raw gradient norms peak at early layers on HS, whereas best performance is achieved at intermediate depths; on SVAMP, gradient norms favor early or intermediate layers, while optimal performance appears at later layers. A similar mismatch is observed for Qwen: on GSM8K, gradient norms peak at intermediate layers, yet the best performance arises from early-layer placement, despite early layers exhibiting the smallest raw gradient norms; on HS, gradient norms again favor intermediate layers while performance is maximized at early layers (see \Cref{fig:largescale_layer_profiles}).

\begin{table}[t]
\centering
\caption{Cost--accuracy tradeoff of two 5-layer LoRA strategies on Qwen3-8B relative to all-layer adaptation. Speedup denotes throughput gain; performance drop denotes accuracy loss.}
\label{tab:tradeoff_qwen}

\vspace{0.4em}
\resizebox{\columnwidth}{!}{
\begin{tabular}{lccc|ccc}
\toprule
& \multicolumn{3}{c}{Bottom-5} & \multicolumn{3}{c}{Top-5} \\
\cmidrule(lr){2-4} \cmidrule(lr){5-7}
Dataset 
& Speedup (\%) & Perf.\ drop (\%) & Ratio
& Speedup (\%) & Perf.\ drop (\%) & Ratio \\
\midrule
GSM8K  
& 63.72 & 12.93 & 4.93
& 44.50 & $-7.76$ & -- \\
HS     
& 55.01 & 16.77 & 3.28
& 31.66 & 11.10 & 2.85 \\
SVAMP  
& 59.77 & 9.55  & 6.26
& 41.87 & 9.56  & 4.38 \\
MathQA 
& 74.74 & 9.56  & 7.82
& 38.49 & 4.57  & 8.43 \\
\bottomrule
\end{tabular}
}
\end{table}

\Cref{tab:tradeoff_qwen} reports a task-transfer simulation that validates the Layer Card concept, in which one dataset is treated as the user target task and the remaining three datasets serve as reference tasks. Under this setting, partial LoRA layer fine-tuning on Qwen3-8B achieves performance close to full 35-layer insertion while substantially reducing both LoRA serving cost (by inserting fewer layers, e.g., 5) and fine-tuning cost (via higher training throughput). This enables two practical operating regimes. \textbf{Top-5} insertion prioritizes accuracy preservation with moderate efficiency gains: performance ranges from a \(7.76\%\) improvement to an \(11.10\%\) degradation, accompanied by speedups of \(31.66\%\)–\(44.50\%\). In contrast, \textbf{Bottom-5} insertion targets aggressive efficiency, yielding larger speedups of \(55.01\%\)–\(74.74\%\) at the cost of higher performance drops ranging from \(9.55\%\) to \(16.77\%\). Together, these results illustrate how Layer Cards enable flexible selection between accuracy-sensitive (Top-5) and efficiency-driven (Bottom-5) adaptation strategies under different deployment constraints.

\section{Conclusion}

We study parameter-efficient fine-tuning from the perspective of layer placement, extending the PEFT design space beyond adapter architecture. By analyzing projected residuals, activation energy, and inter-layer coupling, we show that adapter placement induces systematic trade-offs among correctable bias, optimization behavior, inter-layer interactions, and computational cost, including training time, memory usage, and inference latency as determined by the number of active adapters at inference time. These effects are consistent across models and tasks and are summarized through the proposed Layer Card, which serves as a practical diagnostic for cost--performance-aware placement decisions.

Our results indicate that adapter placement is a meaningful but underexplored degree of freedom in PEFT, complementary to existing approaches. Future work may examine how placement interacts with forgetting, particularly in continual and multi-task settings. More broadly, combining sufficient model--task coverage with layer-wise diagnostics such as the Layer Card may enable automated systems that select PEFT configurations based on data, model scale, and deployment constraints.

\bibliography{ref}
\bibliographystyle{plainnat}

\newpage
%%%%%%%%%%%%%%%%%%%%%%%%%%%%%%%%%%%%%%%%%%%%%%%%%%%%%%%%%%%%%%%%%%%%%%%%%%%%%
\appendix
\section*{Appendix}

%{\color{red}[Please make sure the notations are consistent between the proofs and the main text.]}

\section{Theoretical Analysis}

\subsection{Proofs}
\label{subsec:proof}

\begin{proof}[\textbf{Proof of \Cref{thm:decomposition_main}}]
\label{app:decomposition_proof}
We first bound the gap between the global and blockwise quadratic oracles, and
then propagate this bound to the induced residuals.

\paragraph{Step 1: Quadratic oracle comparison.}
Write the Hessian as $H = D + E$ with
$D := \mathrm{diag}(H_{11},\dots,H_{LL})$ and $E$ collecting the off-diagonal
blocks. Define
\begin{align*}
    M := D^{-1/2} E D^{-1/2},
    \qquad
    \rho := \|M\|_2 < 1 .
\end{align*}
Hence the Neumann series
\begin{align*}
    (I + M)^{-1}
    = \sum_{k=0}^\infty (-M)^k
\end{align*}
converges in operator norm, so $(I + M)$ is invertible. Since
$H = D^{1/2}(I + M)D^{1/2}$, we obtain
\begin{align*}
    H^{-1} = D^{-1/2}(I + M)^{-1} D^{-1/2}.
\end{align*}
Therefore the global quadratic oracle is
\begin{align*}
    \theta^{\mathrm{glob}}_{\mathrm{quad}}
    = -H^{-1} g
    = -D^{-1/2}(I + M)^{-1} D^{-1/2} g,
\end{align*}
while the blockwise local quadratic oracle is
\begin{align*}
    \theta^{\mathrm{loc}}_{\mathrm{quad}} = -D^{-1} g.
\end{align*}
Subtracting gives
\begin{align*}
    \Delta\theta_{\mathrm{quad}}
    =
    -D^{-1/2}\bigl[(I + M)^{-1} - I\bigr]D^{-1/2}g.
\end{align*}
Taking norms and using submultiplicativity,
\begin{align*}
    \|\Delta\theta_{\mathrm{quad}}\|_2
    \le
    \|(I + M)^{-1} - I\|_2
    \,\|D^{-1}\|_2\,\|g\|_2.
\end{align*}
Using the identity
\begin{align*}
    (I + M)^{-1} - I
    = -(I + M)^{-1} M,
\end{align*}
we obtain
\begin{align*}
    \|(I + M)^{-1} - I\|_2
    \le
    \|(I + M)^{-1}\|_2\,\|M\|_2.
\end{align*}
Since $\|M\|_2 = \rho < 1$, the Neumann-series bound yields
\begin{align*}
    \|(I + M)^{-1}\|_2 \le \frac{1}{1-\rho}.
\end{align*}
Combining the above inequalities gives
\begin{align*}
    \|\Delta\theta_{\mathrm{quad}}\|_2
    \le
    \frac{\rho}{1-\rho}\,\|D^{-1}\|_2\,\|g\|_2,
\end{align*}

\paragraph{Step 2: Residual decomposition.}
Assume the quadratic residual expansion
\begin{align*}
    r(x;\theta)
    :=
    F(x;\theta)-F(x;0)
    =
    J(x)^\top\theta
    +
    \tfrac12\,\theta^\top K(x)\theta,
\end{align*}
with block decomposition $K(x)=D_K(x)+E_K(x)$.
Let $\theta=\theta^{\mathrm{glob}}_{\mathrm{quad}}$ and
$\tilde\theta=\theta^{\mathrm{loc}}_{\mathrm{quad}}$, and define
$\Delta\theta=\theta-\tilde\theta$.
A direct expansion yields
\begin{align*}
    r^{\mathrm{glob}}_{\mathrm{quad}}(x)
    -
    \sum_{\ell=1}^L r^{\mathrm{loc}}_{\ell,\mathrm{quad}}(x)
    =\;&
    \sum_{\ell=1}^L J_\ell(x)^\top \Delta\theta_\ell \\[0.25em]
    &+
    \sum_{\ell=1}^L
        \tilde\theta_\ell^\top K_{\ell\ell}(x)\Delta\theta_\ell \\[0.25em]
    &+
    \tfrac12\sum_{\ell=1}^L
        \Delta\theta_\ell^\top K_{\ell\ell}(x)\Delta\theta_\ell \\[0.25em]
    &+
    \tfrac12\,\theta^{\mathrm{glob}\top}_{\mathrm{quad}}
        E_K(x)\theta^{\mathrm{glob}}_{\mathrm{quad}} .
\end{align*}

\paragraph{Step 3: Bounding interaction terms.}
By Cauchy--Schwarz and the Jacobian bound,
\begin{align*}
    \Bigl|
        \sum_{\ell=1}^L J_\ell(x)^\top \Delta\theta_\ell
    \Bigr|
    \le
    L_J\,\|\Delta\theta_{\mathrm{quad}}\|_2.
\end{align*}
For the block-diagonal quadratic terms, using
$\|K_{\ell\ell}(x)\|_2\le\|D_K(x)\|_2$,
\begin{align*}
    \Bigl|
        \sum_{\ell=1}^L
            \tilde\theta_\ell^\top K_{\ell\ell}(x)\Delta\theta_\ell
    \Bigr|
    \le
    \|D_K(x)\|_2\,
    \|\tilde\theta\|_2\,
    \|\Delta\theta_{\mathrm{quad}}\|_2,
\end{align*}
and
\begin{align*}
    \Bigl|
        \tfrac12\sum_{\ell=1}^L
            \Delta\theta_\ell^\top K_{\ell\ell}(x)\Delta\theta_\ell
    \Bigr|
    \le
    \tfrac12\,\|D_K(x)\|_2\,
    \|\Delta\theta_{\mathrm{quad}}\|_2^2.
\end{align*}
For the off-diagonal term, define
$z := D_K(x)^{1/2}\theta^{\mathrm{glob}}_{\mathrm{quad}}$.
Then
\begin{align*}
    \theta^{\mathrm{glob}\top}_{\mathrm{quad}} E_K(x)\theta^{\mathrm{glob}}_{\mathrm{quad}}
    =
    z^\top
    \bigl(
        D_K(x)^{-1/2} E_K(x) D_K(x)^{-1/2}
    \bigr)
    z,
\end{align*}
so by the definition of $\rho_K$,
\begin{align*}
    \Bigl|
        \theta^{\mathrm{glob}\top}_{\mathrm{quad}}
        E_K(x)\theta^{\mathrm{glob}}_{\mathrm{quad}}
    \Bigr|
    \le
    \rho_K\,\|z\|_2^2
    \le
    \rho_K\,\|D_K(x)\|_2\,
    \|\theta^{\mathrm{glob}}_{\mathrm{quad}}\|_2^2.
\end{align*}
Dividing by $2$ yields the final term. Combining all bounds yields the result.
\end{proof}

\begin{proof}[\textbf{Proof of \Cref{thm:layer_grad_residual_equivalence}}]
Expanding the definition of $\mathcal R_\ell(\theta)$ gives
\begin{align*}
    \mathcal R_\ell(\theta)
    &=
    \tfrac12\,\theta^\top \Sigma_\ell \theta
    - \theta^\top c_\ell
    + \tfrac12\,\mathbb E[r_\ell(x_\ell)^2],\\
    c_\ell &= \mathbb E[\phi_\ell(x_\ell)\,r_\ell(x_\ell)].
\end{align*}
The minimizer $\theta_\ell^\star$ satisfies
$\Sigma_\ell \theta_\ell^\star = c_\ell$, hence
$\theta_\ell^\star = \Sigma_\ell^{-1} c_\ell$.
The gradient at $\theta=0$ is $h_\ell=-c_\ell$, so
$\theta_\ell^\star=-\Sigma_\ell^{-1}h_\ell$.
Therefore,
\begin{align*}
\|r_{\ell,\mathrm{proj}}\|
&=
\theta_\ell^{\star\top} \Sigma_\ell \theta_\ell^\star
=
h_\ell^\top \Sigma_\ell^{-1} h_\ell .
\end{align*}

\end{proof}

\begin{proof}[\textbf{Proof of \Cref{prop:spectral-hardness}}]
The risk can be written as
\begin{align*}
\mathcal R(\theta)
=
\tfrac12\,\theta^\top\Sigma\theta - \theta^\top c + \text{const},
\end{align*}
so $\nabla\mathcal R(\theta)=\Sigma\theta-c$ and the stationarity condition yields
$\theta^\star=\Sigma^{-1}c$. Completing the square gives
\begin{align*}
\mathcal R(\theta)
=
\mathcal R(\theta^\star)
+\tfrac12(\theta-\theta^\star)^\top\Sigma(\theta-\theta^\star)
=
\mathcal R(\theta^\star)
+\tfrac12\|\theta-\theta^\star\|_{\Sigma}^2.
\end{align*}

Diagonalizing $\Sigma=U\Lambda U^\top$ and writing $c=U\tilde c$ yields
$\theta^\star=U\Lambda^{-1}\tilde c$ and
\begin{align*}
\|\theta^\star\|_{\Sigma}^2
=
\theta^{\star\top}\Sigma\theta^\star
=
\tilde c^\top\Lambda^{-1}\tilde c
=
\sum_{i=1}^d \frac{\tilde c_i^2}{\lambda_i}.
\end{align*}

If $c$ is observed with noise $c+\zeta$, then
$\widehat\theta^\star-\theta^\star=\Sigma^{-1}\zeta$ and
\begin{align*}
\mathcal R(\widehat\theta^\star)-\mathcal R(\theta^\star)
=
\tfrac12\,(\Sigma^{-1}\zeta)^\top\Sigma(\Sigma^{-1}\zeta)
=
\tfrac12\,\zeta^\top\Sigma^{-1}\zeta.
\end{align*}
Taking expectation and using
$\mathbb E[z^\top A z]=\mathrm{tr}(A\,\mathrm{Cov}(z))$ for mean–zero $z$ gives
\begin{align*}
\mathbb E\big[\mathcal R(\widehat\theta^\star)-\mathcal R(\theta^\star)\big]
=
\tfrac12\,\mathrm{tr}(\Sigma^{-1}\Gamma),
\end{align*}
which diagonalizes to the stated sum.

For the norm constraint, the reverse triangle inequality gives
\begin{align*}
\|\theta-\theta^\star\|_{\Sigma}
\ge
\bigl|\|\theta^\star\|_{\Sigma}-\|\theta\|_{\Sigma}\bigr|.
\end{align*}
Moreover,
\begin{align*}
\|\theta\|_{\Sigma}^2
=
\theta^\top\Sigma\theta
\le
\lambda_{\max}\|\theta\|_2^2
\le
\lambda_{\max}B^2,
\end{align*}
so $\|\theta\|_{\Sigma}\le\sqrt{\lambda_{\max}}B$, while
$\|\theta^\star\|_{\Sigma}=\|\Sigma^{-1/2}c\|_2$.
Substituting these bounds yields the claimed lower bound.
\end{proof}

\begin{lem}[Interaction norm versus off-diagonal magnitude]
\label{lem:E_vs_whitened_app}
For a symmetric block matrix $A=(A_{\ell k})$ with $A_{\ell\ell}\succ0$, define
$D_A=\operatorname{diag}(A_{11},\dots,A_{LL})$, $E_A=A-D_A$,
$M_A=D_A^{-1/2}E_A D_A^{-1/2}$. Then
\begin{align*}
\lambda_{\min}(D_A)\,\|M_A\|_2
\;\le\;
\|E_A\|_2
\;\le\;
\|D_A\|_2\,\|M_A\|_2.
\end{align*}
\end{lem}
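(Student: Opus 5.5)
The plan is to write $E_A$ as a congruence transform of $M_A$ and apply submultiplicativity of the operator norm in both directions. Since each $A_{\ell\ell}\succ0$, the block-diagonal matrix $D_A$ is symmetric positive definite. It therefore has a well-defined symmetric positive definite square root $D_A^{1/2}=\operatorname{diag}(A_{11}^{1/2},\dots,A_{LL}^{1/2})$ and inverse square root $D_A^{-1/2}$. The definition of $M_A$ inverts to
\[
E_A = D_A^{1/2} M_A D_A^{1/2}.
\]
Both inequalities follow from this identity and its original form $M_A=D_A^{-1/2}E_AD_A^{-1/2}$.

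For the upper bound, I take norms in $E_A=D_A^{1/2}M_AD_A^{1/2}$ to get $\|E_A\|_2\le\|D_A^{1/2}\|_2^2\,\|M_A\|_2$. Since $D_A^{1/2}$ is symmetric positive definite, $\|D_A^{1/2}\|_2^2=\lambda_{\max}(D_A^{1/2})^2=\lambda_{\max}(D_A)=\|D_A\|_2$, which gives $\|E_A\|_2\le\|D_A\|_2\|M_A\|_2$.

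For the lower bound, I take norms in $M_A=D_A^{-1/2}E_AD_A^{-1/2}$ to get $\|M_A\|_2\le\|D_A^{-1/2}\|_2^2\,\|E_A\|_2$. Here $\|D_A^{-1/2}\|_2^2=\lambda_{\max}(D_A^{-1})=1/\lambda_{\min}(D_A)$. Multiplying through by $\lambda_{\min}(D_A)>0$ yields $\lambda_{\min}(D_A)\|M_A\|_2\le\|E_A\|_2$.

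There is no real obstacle in this argument. The one point to justify carefully is the spectral identities for the square-root factors, namely $\|D_A^{\pm1/2}\|_2^2=\lambda_{\max}(D_A^{\pm1})$. These rely on $D_A^{\pm1/2}$ being symmetric positive definite, so that each operator norm equals the largest eigenvalue. I would also note that $\lambda_{\min}(D_A)=\min_\ell\lambda_{\min}(A_{\ell\ell})$ and $\|D_A\|_2=\max_\ell\|A_{\ell\ell}\|_2$, because the spectrum of a block-diagonal matrix is the union of the spectra of its blocks. This makes the constants interpretable blockwise when the lemma is applied to $H$ in \Cref{thm:decomposition_main} or to $K(x)$.
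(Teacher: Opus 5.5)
Your proof is correct. The upper bound is the paper's argument verbatim. For the lower bound you take a different and shorter route: you apply submultiplicativity to the inverse congruence $M_A=D_A^{-1/2}E_AD_A^{-1/2}$, which gives $\|M_A\|_2\le\|E_A\|_2/\lambda_{\min}(D_A)$ in one line.

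The paper instead builds a test vector. It takes a unit eigenvector $u$ of $M_A$ with $|u^\top M_A u|=\|M_A\|_2$ and sets $x=D_A^{-1/2}u/\|D_A^{-1/2}u\|_2$. It then computes $|x^\top E_A x|=\|M_A\|_2/(u^\top D_A^{-1}u)\ge\lambda_{\min}(D_A)\|M_A\|_2$ and concludes with $\|E_A\|_2\ge|x^\top E_A x|$.

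What your version buys:
\begin{itemize}
\item The two bounds become visibly dual, each a norm bound on one side of the congruence.
\item It uses nothing about symmetry of $M_A$ or $E_A$, only $D_A\succ0$, so it holds verbatim for nonsymmetric off-diagonal parts.
\end{itemize}
The paper needs symmetry of $M_A$ to obtain an eigenvector that attains the operator norm.

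What the paper's version buys:
\begin{itemize}
\item It exhibits an explicit witness direction $x$.
\item It yields the intermediate bound $\|E_A\|_2\ge\|M_A\|_2/(u^\top D_A^{-1}u)$. This is strictly sharper than $\lambda_{\min}(D_A)\|M_A\|_2$ unless the top eigenvector of $M_A$ lies in the smallest-eigenvalue eigenspace of $D_A$.
\item It lower-bounds the quadratic form (numerical radius) of $E_A$, not just its operator norm. For symmetric $A$ these coincide, so nothing is lost in your version for the lemma as stated.
\end{itemize}
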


\begin{proof}[\textbf{Proof of \Cref{lem:E_vs_whitened_app}}]
Since $E_A=D_A^{1/2}M_A D_A^{1/2}$,
\begin{align*}
\|E_A\|_2\le \|D_A^{1/2}\|_2^2\,\|M_A\|_2=\|D_A\|_2\,\|M_A\|_2.
\end{align*}
For the lower bound, let $u$ be a unit eigenvector of $M_A$ with eigenvalue $\lambda$ satisfying
$|\lambda|=\|M_A\|_2$. Set $x:=D_A^{-1/2}u/\|D_A^{-1/2}u\|_2$ so $\|x\|_2=1$. Then
\begin{align*}
|x^\top E_A x|
=|(D_A^{1/2}x)^\top M_A(D_A^{1/2}x)|
=\frac{|u^\top M_A u|}{u^\top D_A^{-1}u}
=\frac{\|M_A\|_2}{u^\top D_A^{-1}u}.
\end{align*}
Since $u^\top D_A^{-1}u\le \|D_A^{-1}\|_2=1/\lambda_{\min}(D_A)$, we obtain
$|x^\top E_A x|\ge \lambda_{\min}(D_A)\|M_A\|_2$. Finally,
$\|E_A\|_2=\max_{\|y\|_2=1}|y^\top E_A y|\ge |x^\top E_A x|$.
\end{proof}

\begin{proof}[\textbf{Proof of \Cref{cor:layerwise_selective_main}}]
Minimizing $Q(\theta_\ell,\theta_R)$ over $\theta_R$ yields the reduced quadratic
\begin{align*}
\min_{\theta_R} Q(\theta_\ell,\theta_R)
=
\mathrm{const}
+
\tilde g_\ell^\top \theta_\ell
+
\tfrac12\,\theta_\ell^\top H_{\ell\mid R}\theta_\ell,
\end{align*}
with
$H_{\ell\mid R}=H_{\ell\ell}-H_{\ell R}H_{RR}^{-1}H_{R\ell}$ and
$\tilde g_\ell=g_\ell-H_{\ell R}H_{RR}^{-1}g_R$.
Thus the minimizer over $\theta_\ell$ is $-H_{\ell\mid R}^{-1}\tilde g_\ell$ and
\begin{align*}
Q^{(-\ell)}-Q^\star=\tfrac12\,\tilde g_\ell^\top H_{\ell\mid R}^{-1}\tilde g_\ell.
\end{align*}

Write
\begin{align*}
H_{\ell\mid R}
=
H_{\ell\ell}^{1/2}(I-B_\ell B_\ell^\top)H_{\ell\ell}^{1/2},
\qquad
B_\ell:=H_{\ell\ell}^{-1/2}H_{\ell R}H_{RR}^{-1/2}.
\end{align*}
Since $B_\ell B_\ell^\top\succeq 0$ and $\|B_\ell B_\ell^\top\|_2=\|B_\ell\|_2^2=\kappa_\ell$,
eigenvalues of $I-B_\ell B_\ell^\top$ lie in $[1-\kappa_\ell,1]$, hence
\begin{align*}
H_{\ell\ell}^{-1}\ \preceq\ H_{\ell\mid R}^{-1}\ \preceq\ \frac{1}{1-\kappa_\ell}\,H_{\ell\ell}^{-1}.
\end{align*}
Letting $w_\ell=H_{\ell\ell}^{-1/2}\tilde g_\ell$ gives
\begin{align*}
\frac12\,\|w_\ell\|_2^2
\le
Q^{(-\ell)}-Q^\star
\le
\frac{1}{2(1-\kappa_\ell)}\,\|w_\ell\|_2^2.
\end{align*}

To lower-bound $\|w_\ell\|_2$, write $w_\ell=u_\ell-v_\ell$ where
$u_\ell:=H_{\ell\ell}^{-1/2}g_\ell$ and
\begin{align*}
v_\ell
:=
H_{\ell\ell}^{-1/2}H_{\ell R}H_{RR}^{-1}g_R
=
B_\ell\,(H_{RR}^{-1/2}g_R).
\end{align*}
Then $\|v_\ell\|_2\le \|B_\ell\|_2\,\|H_{RR}^{-1/2}g_R\|_2
=\sqrt{\kappa_\ell}\,\|H_{RR}^{-1/2}g_R\|_2=C_\ell$.
By the reverse triangle inequality,
\begin{align*}
\|w_\ell\|_2=\|u_\ell-v_\ell\|_2\ge \max\{\|u_\ell\|_2-\|v_\ell\|_2,0\}\ge \max\{\|u_\ell\|_2-C_\ell,0\}=s_\ell,
\end{align*}
and hence $Q^{(-\ell)}-Q^\star\ge \tfrac12 s_\ell^2$.

Finally, if $F$ is any frozen set with $\ell\in F$, the feasible set for $Q^{(-F)}$ is contained in the feasible set for $Q^{(-\ell)}$, so $Q^{(-F)}\ge Q^{(-\ell)}$ and thus $Q^{(-F)}-Q^\star\ge Q^{(-\ell)}-Q^\star$.
\end{proof}

\begin{proof}[\textbf{Proof of \Cref{thm:spread_beats_adjacent}}]
For \eqref{eq:Delta_exact_main}, minimizing $Q(\theta_S,0)$ over $\theta_S$ gives
$\theta_S^\star=-H_{SS}^{-1}g_S$ and therefore
\begin{align*}
\Delta(S)=Q(0)-Q(\theta_S^\star)=\tfrac12\, g_S^\top H_{SS}^{-1} g_S.
\end{align*}

For \eqref{eq:Delta_sandwich_main}, write
$H_{SS}=D_S^{1/2}(I+M_S)D_S^{1/2}$, hence
$H_{SS}^{-1}=D_S^{-1/2}(I+M_S)^{-1}D_S^{-1/2}$ and
\begin{align*}
\Delta(S)=\tfrac12\,u^\top (I+M_S)^{-1}u,
\qquad
u:=D_S^{-1/2}g_S,
\qquad
\Delta_{\mathrm{add}}(S)=\tfrac12\|u\|_2^2.
\end{align*}
Since $M_S$ is symmetric and $\|M_S\|_2=\rho_S<1$, all eigenvalues of $I+M_S$ lie in
$[1-\rho_S,\,1+\rho_S]$, so eigenvalues of $(I+M_S)^{-1}$ lie in
$[1/(1+\rho_S),\,1/(1-\rho_S)]$. This yields \eqref{eq:Delta_sandwich_main}. Also,
\begin{align*}
\|(I+M_S)^{-1}-I\|_2
=\max_{\lambda\in[-\rho_S,\rho_S]}\left|\frac{1}{1+\lambda}-1\right|
=\frac{\rho_S}{1-\rho_S},
\end{align*}
so
\begin{align*}
\bigl|\Delta(S)-\Delta_{\mathrm{add}}(S)\bigr|
\le \tfrac12\|(I+M_S)^{-1}-I\|_2\|u\|_2^2
= \frac{\rho_S}{1-\rho_S}\Delta_{\mathrm{add}}(S),
\end{align*}

\end{proof}

\subsection{Auxiliary Results}

\subsubsection{Nonlinear generalization of activation-energy hardness}

\Cref{prop:spectral-hardness} in the main text established that for linear adapters, small activation energy leads to spectral ill-conditioning, which in turn induces both noise amplification and norm-budget–limited optimization hardness. The proposition below shows that the same qualitative hardness persists for a broad class of nonlinear adapters under mild amplitude
and stability assumptions. Thus, the linear analysis captures a general layer-local phenomenon rather than an artifact of linearization.

Fix a layer $\ell$. Let $x\sim\mu$ and let
$x_\ell=x_\ell(x)\in\mathbb R^{d_\ell}$ denote the frozen input to layer $\ell$.
Let $r^\star(x_\ell)\in\mathbb R$ denote the target residual associated with this
layer input. Consider a nonlinear adapter family
\begin{align*}
r_\theta(x) := G_\theta(x_\ell(x)),\qquad \theta\in\Theta\subset\mathbb R^p,
\end{align*}
with squared-loss risk
\begin{align*}
\mathcal R(\theta)
:=
\frac12\,\mathbb E\bigl(G_\theta(x_\ell)-r^\star(x_\ell)\bigr)^2.
\end{align*}
Define the activation energy
\begin{align*}
\sigma_\ell
:=
\frac{1}{d_\ell}\,\mathbb E\|x_\ell\|_2^2,
\qquad
\|f\|_2 := \sqrt{\mathbb E[f(x_\ell)^2]}.
\end{align*}

Fix a budget set $\Theta_B := \{\theta\in\Theta:\|\theta\|_2\le B\}$ and assume the
following two conditions hold. There exists $A_\ell(B)>0$ such that for all
$\theta\in\Theta_B$ and all $u\in\mathbb R^{d_\ell}$,
\begin{align*}
G_\theta(0)=0,
\qquad
|G_\theta(u)| \le A_\ell(B)\,\|u\|_2.
\end{align*}
There also exists $L_\ell(B)>0$ such that for all
$\theta,\theta'\in\Theta_B$ and all $u$,
\begin{align*}
|G_{\theta'}(u)-G_\theta(u)|
\le
L_\ell(B)\,\|\theta'-\theta\|_2\,\|u\|_2.
\end{align*}

\begin{prop}[Nonlinear activation-energy hardness]
\label{prop:nonlinear-hardness}
Under the conditions above, the following bounds hold.

First, the best achievable approximation error under budget $B$ satisfies
\begin{align*}
\inf_{\theta\in\Theta_B}
\|G_\theta(x_\ell)-r^\star(x_\ell)\|_2^2
\;\ge\;
\Bigl(
\|r^\star\|_2
-
A_\ell(B)\sqrt{\mathbb E\|x_\ell\|_2^2}
\Bigr)_+^2
=
\Bigl(
\|r^\star\|_2
-
A_\ell(B)\sqrt{d_\ell\,\sigma_\ell}
\Bigr)_+^2,
\end{align*}
and hence
\begin{align*}
\inf_{\theta\in\Theta_B}\mathcal R(\theta)
\;\ge\;
\frac12
\Bigl(
\|r^\star\|_2
-
A_\ell(B)\sqrt{d_\ell\,\sigma_\ell}
\Bigr)_+^2.
\end{align*}

Second, for any $\theta,\theta'\in\Theta_B$,
\begin{align*}
\bigl|\mathcal R(\theta')-\mathcal R(\theta)\bigr|
\le
L_\ell(B)\,\|\theta'-\theta\|_2\,
\sqrt{\mathbb E\|x_\ell\|_2^2}\,
\|G_\theta(x_\ell)-r^\star(x_\ell)\|_2
+
\frac12\,L_\ell(B)^2\,\|\theta'-\theta\|_2^2\,\mathbb E\|x_\ell\|_2^2.
\end{align*}
In particular, for fixed $\theta,\theta'$ and fixed residual error,
both terms vanish as $\sigma_\ell\to0$.
\end{prop}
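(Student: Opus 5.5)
The plan is to work throughout in the Hilbert space $L^2(\mu)$ of functions of $x_\ell$ with norm $\|f\|_2=\sqrt{\mathbb E[f(x_\ell)^2]}$. Both parts reduce to pointwise bounds from the amplitude and stability conditions, integrated with Cauchy--Schwarz.

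\textbf{First bound.} Fix $\theta\in\Theta_B$.
\begin{itemize}
\item Apply the reverse triangle inequality in $L^2(\mu)$: $\|G_\theta-r^\star\|_2\ge \|r^\star\|_2-\|G_\theta\|_2$.
\item Since the squared norm is nonnegative, this gives $\|G_\theta-r^\star\|_2^2\ge(\|r^\star\|_2-\|G_\theta\|_2)_+^2$.
\item The amplitude condition gives $|G_\theta(x_\ell)|\le A_\ell(B)\|x_\ell\|_2$ pointwise. Squaring and taking expectations yields $\|G_\theta\|_2\le A_\ell(B)\sqrt{\mathbb E\|x_\ell\|_2^2}$.
\item The map $t\mapsto (a-t)_+^2$ is nonincreasing in $t$, so the upper bound on $\|G_\theta\|_2$ can be substituted.
\item The resulting bound is uniform in $\theta\in\Theta_B$, so it survives the infimum.
\item The identity $\mathbb E\|x_\ell\|_2^2=d_\ell\sigma_\ell$ is the definition of $\sigma_\ell$.
\item Multiplying by $\tfrac12$ gives the risk statement.
\end{itemize}
The condition $G_\theta(0)=0$ is only needed to make the linear-growth bound consistent; it is not used separately.

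\textbf{Second bound.} Write $e_\theta=G_\theta(x_\ell)-r^\star(x_\ell)$ and $\delta=G_{\theta'}(x_\ell)-G_\theta(x_\ell)$.
\begin{itemize}
\item Expand the square: $2(\mathcal R(\theta')-\mathcal R(\theta))=\mathbb E[(e_\theta+\delta)^2-e_\theta^2]=2\mathbb E[e_\theta\delta]+\mathbb E[\delta^2]$.
\item By the triangle inequality, $|\mathcal R(\theta')-\mathcal R(\theta)|\le |\mathbb E[e_\theta\delta]|+\tfrac12\mathbb E[\delta^2]$.
\item The stability condition gives $|\delta|\le L_\ell(B)\|\theta'-\theta\|_2\|x_\ell\|_2$ pointwise.
\item Cauchy--Schwarz gives $|\mathbb E[e_\theta\delta]|\le \|e_\theta\|_2\,\|\delta\|_2\le \|e_\theta\|_2\,L_\ell(B)\|\theta'-\theta\|_2\sqrt{\mathbb E\|x_\ell\|_2^2}$. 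This is the first term.
\item Squaring the pointwise bound on $\delta$ and taking expectations gives $\mathbb E[\delta^2]\le L_\ell(B)^2\|\theta'-\theta\|_2^2\,\mathbb E\|x_\ell\|_2^2$. This is the second term.
\end{itemize}
For the closing remark, substitute $\mathbb E\|x_\ell\|_2^2=d_\ell\sigma_\ell$. The first term scales as $\sqrt{\sigma_\ell}$ and the second as $\sigma_\ell$, with $\theta,\theta'$ and $\|e_\theta\|_2$ held fixed. Both therefore vanish as $\sigma_\ell\to0$.

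\textbf{Main obstacle.} The algebra is routine, so the main risk is a definitional subtlety: the constants $A_\ell(B)$ and $L_\ell(B)$ must not depend on the distribution of $x_\ell$. Otherwise the limit $\sigma_\ell\to0$ would not be meaningful. The hypotheses are stated uniformly over all $u\in\mathbb R^{d_\ell}$, which settles this, and I would note it explicitly in the write-up.
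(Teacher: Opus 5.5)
Your proof is correct and follows the paper's argument essentially step for step. For the first part, the paper gets $\|G_\theta-r^\star\|_2^2\ge(\|r^\star\|_2-\|G_\theta\|_2)^2$ by expanding the square and applying Cauchy--Schwarz, which amounts to the reverse triangle inequality you invoke, and then substitutes the amplitude bound. For the second part, the paper uses the same expansion $\mathcal R(\theta')-\mathcal R(\theta)=\mathbb E[(G_\theta-r^\star)\Delta]+\tfrac12\mathbb E[\Delta^2]$, followed by Cauchy--Schwarz and the parameter-Lipschitz condition. Your explicit appeal to the monotonicity of $t\mapsto(a-t)_+^2$ is slightly more careful than the paper, which passes to the positive-part bound without comment.
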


\begin{proof}
For any $\theta\in\Theta_B$, the amplitude condition implies
\begin{align*}
|G_\theta(x_\ell)| \le A_\ell(B)\,\|x_\ell\|_2,
\qquad
\|G_\theta(x_\ell)\|_2^2
\le
A_\ell(B)^2\,\mathbb E\|x_\ell\|_2^2.
\end{align*}
By Cauchy--Schwarz,
\begin{align*}
\mathbb E[G_\theta(x_\ell)r^\star(x_\ell)]
\le
\|G_\theta(x_\ell)\|_2\,\|r^\star\|_2.
\end{align*}
Expanding the square,
\begin{align*}
\|G_\theta-r^\star\|_2^2
=
\|G_\theta\|_2^2
-2\mathbb E[G_\theta r^\star]
+\|r^\star\|_2^2
\ge
(\|r^\star\|_2-\|G_\theta\|_2)^2,
\end{align*}
which yields the approximation lower bound after substituting the amplitude
control and taking the infimum over $\theta$.

For the flatness bound, define
$\Delta(x_\ell):=G_{\theta'}(x_\ell)-G_\theta(x_\ell)$.
Then
\begin{align*}
\mathcal R(\theta')-\mathcal R(\theta)
=
\mathbb E[(G_\theta-r^\star)\Delta]
+
\frac12\,\mathbb E[\Delta^2].
\end{align*}
By Cauchy--Schwarz,
\begin{align*}
|\mathbb E[(G_\theta-r^\star)\Delta]|
\le
\|G_\theta-r^\star\|_2\,\|\Delta\|_2.
\end{align*}
The parameter Lipschitz condition implies
$|\Delta(x_\ell)|\le L_\ell(B)\|\theta'-\theta\|_2\|x_\ell\|_2$, hence
\begin{align*}
\|\Delta\|_2
\le
L_\ell(B)\|\theta'-\theta\|_2\sqrt{\mathbb E\|x_\ell\|_2^2},
\qquad
\mathbb E[\Delta^2]
\le
L_\ell(B)^2\|\theta'-\theta\|_2^2\,\mathbb E\|x_\ell\|_2^2.
\end{align*}
Substituting these bounds gives the stated inequality.
\end{proof}

\noindent
Under a fixed adapter family and parameter budget, the maximum achievable correction amplitude scales like $\sqrt{\sigma_\ell}$, while changes in risk induced by parameter updates are suppressed by $\sigma_\ell$. Consequently, if a layer has very small activation energy but the associated target residual is not small, the adaptation problem is intrinsically hard: a nontrivial error remains
even at the best parameter choice, and the risk landscape is flat in parameter space. This nonlinear result mirrors the linear spectral hardness established in the main text and confirms that low-activation layers are systematically difficult to adapt, independent of linearization.

\subsubsection{Activation energy relates to layer coupling}

\label{sec:sigma_coupling}

We analyze how layerwise activation energy interacts with curvature-normalized
cross-layer coupling under squared-loss adaptation. Focusing on globally linear
adapters, we show that low activation energy can amplify effective coupling
through whitening, even when unnormalized cross-layer covariances are small.

We consider squared-loss adaptation
$\mathcal L(\theta)=\tfrac12\,\mathbb E[(F(x;\theta)-F^\star(x))^2]$
and write the target residual as $r^\star(x)=F^\star(x)-F(x;0)$.
Let $\phi(x)\in\mathbb R^d$ denote frozen adapter features, partitioned by layer as
$\phi(x)=(\phi_1(x),\dots,\phi_L(x))$ with $\phi_\ell(x)\in\mathbb R^{d_\ell}$ and
$\sum_\ell d_\ell=d$.
Define the feature covariance $\Sigma=\mathbb E[\phi(x)\phi(x)^\top]$ with blocks
$\Sigma_{\ell k}=\mathbb E[\phi_\ell(x)\phi_k(x)^\top]$, and the layerwise activation
energy $\sigma_\ell=\tfrac{1}{d_\ell}\operatorname{tr}(\Sigma_{\ell\ell})$.

Let $g=\nabla\mathcal L(0)$ and $H=\nabla^2\mathcal L(0)$ denote the gradient and
Hessian at the frozen model, with block decompositions
$g=(g_1,\dots,g_L)$ and $H=(H_{\ell k})$.

\paragraph{Block interaction norm (notation).}
For notational convenience, for any symmetric block matrix
$A=(A_{\ell k})$ with $A_{\ell\ell}\succ0$, define
$D_A=\operatorname{diag}(A_{11},\dots,A_{LL})$,
$M_A=D_A^{-1/2}(A-D_A)D_A^{-1/2}$,
and $\tilde\rho(A)=\|M_A\|_2$.
We abbreviate $\tilde\rho_\Sigma=\tilde\rho(\Sigma)$ and
$\tilde\rho_H=\tilde\rho(H)$.

\begin{thm}
\label{thm:sigma_to_rhoH_main}
For globally linear adapters under squared loss, i.e.,
$r_\theta(x)=\theta^\top\phi(x)$ for a fixed feature map $\phi(x)$, assume $\Sigma_{\ell\ell}\succ0$ and $H_{\ell\ell}\succ0$ for all $\ell$.
For each $\ell$, let $P_\ell$ project onto eigenvectors of $\Sigma_{\ell\ell}$ with eigenvalues $\le 2\sigma_\ell$.
Then

\begin{equation}
\label{eq:sigma_to_rhoH_linear}
\tilde\rho_H
\;\ge\;
\left(
\frac{1}{4d}\sum_{\ell\neq k}
\frac{\|P_\ell\,\Sigma_{\ell k}\,P_k\|_F^2}{\sigma_\ell\,\sigma_k}
\right)^{1/2}.
\end{equation}

\end{thm}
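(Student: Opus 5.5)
The plan is to reduce everything to the feature covariance and then lower-bound a spectral norm by a Frobenius norm, block by block.

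\textbf{Step 1: identify $H$ with $\Sigma$.} For globally linear adapters $r_\theta(x)=\theta^\top\phi(x)$ under squared loss, we have $\mathcal L(\theta)=\tfrac12\,\mathbb E[(\theta^\top\phi(x)-r^\star(x))^2]$. This is exactly quadratic in $\theta$, so $H=\nabla^2\mathcal L(0)=\mathbb E[\phi\phi^\top]=\Sigma$, with blocks $H_{\ell k}=\Sigma_{\ell k}$. Hence $\tilde\rho_H=\tilde\rho_\Sigma=\|M_\Sigma\|_2$, where
\[
M_\Sigma=D_\Sigma^{-1/2}(\Sigma-D_\Sigma)D_\Sigma^{-1/2}.
\]
Its $(\ell,k)$ block is $\Sigma_{\ell\ell}^{-1/2}\Sigma_{\ell k}\Sigma_{kk}^{-1/2}$ for $\ell\neq k$, and its diagonal blocks are zero. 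The assumptions $\Sigma_{\ell\ell}\succ0$ make these inverse square roots well defined.

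\textbf{Step 2: pass from spectral to Frobenius norm.} $M_\Sigma$ is a $d\times d$ matrix, so its rank is at most $d$, and therefore $\|M_\Sigma\|_2^2\ge \|M_\Sigma\|_F^2/d$. The Frobenius norm splits over blocks:
\[
\|M_\Sigma\|_F^2=\sum_{\ell\neq k}\bigl\|\Sigma_{\ell\ell}^{-1/2}\Sigma_{\ell k}\Sigma_{kk}^{-1/2}\bigr\|_F^2 .
\]
It then suffices to prove, for each pair $\ell\neq k$,
\[
\bigl\|\Sigma_{\ell\ell}^{-1/2}\Sigma_{\ell k}\Sigma_{kk}^{-1/2}\bigr\|_F^2\;\ge\;\frac{\|P_\ell\Sigma_{\ell k}P_k\|_F^2}{4\,\sigma_\ell\sigma_k}.
\]
Taking square roots after summing then gives \eqref{eq:sigma_to_rhoH_linear}.

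\textbf{Step 3: the per-block bound (main obstacle).} This is where the low-energy projections enter, and the argument must keep track of how the projections commute with the whitening.

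First, compressing by orthogonal projections does not increase the Frobenius norm:
\[
\|X\|_F\ge\|P_\ell XP_k\|_F, \qquad X:=\Sigma_{\ell\ell}^{-1/2}\Sigma_{\ell k}\Sigma_{kk}^{-1/2}.
\]

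Second, $P_\ell$ is a spectral projection of $\Sigma_{\ell\ell}$, so it commutes with $\Sigma_{\ell\ell}^{-1/2}$. This gives
\[
P_\ell XP_k=\bigl(\Sigma_{\ell\ell}^{-1/2}P_\ell\bigr)\,\bigl(P_\ell\Sigma_{\ell k}P_k\bigr)\,\bigl(P_k\Sigma_{kk}^{-1/2}\bigr).
\]

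Third, on the range of $P_\ell$ the eigenvalues of $\Sigma_{\ell\ell}$ are at most $2\sigma_\ell$. So $\Sigma_{\ell\ell}^{-1/2}$ acts there with singular values at least $(2\sigma_\ell)^{-1/2}$, and it maps $\operatorname{range}P_\ell$ into itself. Write $Y:=P_\ell\Sigma_{\ell k}P_k$, whose columns lie in $\operatorname{range}P_\ell$ and whose rows lie in $\operatorname{range}P_k$. Then
\[
\|\Sigma_{\ell\ell}^{-1/2}Y\|_F\ge(2\sigma_\ell)^{-1/2}\|Y\|_F .
\]
The same argument on the right gives a further factor $(2\sigma_k)^{-1/2}$. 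Squaring yields the factor $1/(4\sigma_\ell\sigma_k)$.

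Combining Steps 2 and 3 gives
\[
\tilde\rho_H^2\ge\frac1d\sum_{\ell\neq k}\frac{\|P_\ell\Sigma_{\ell k}P_k\|_F^2}{4\sigma_\ell\sigma_k},
\]
which is the claim. The projections $P_\ell$ are nonzero because some eigenvalue of $\Sigma_{\ell\ell}$ is at most its mean $\sigma_\ell\le2\sigma_\ell$; in any case, if some $P_\ell$ were zero the corresponding terms would vanish and the inequality would still hold.
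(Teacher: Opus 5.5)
Your proof is correct and follows the paper's argument: identify $H=\Sigma$, use $\|M_\Sigma\|_2^2\ge\|M_\Sigma\|_F^2/d$, split the Frobenius norm over the off-diagonal blocks, and lower-bound each block by keeping only the directions where $\Sigma_{\ell\ell}$ has eigenvalues at most $2\sigma_\ell$. The paper does the per-block step in eigen-coordinates, writing $\|(M_\Sigma)_{\ell k}\|_F^2=\sum_{i,j}(U_\ell^\top\Sigma_{\ell k}U_k)_{ij}^2/(\lambda_{\ell,i}\lambda_{k,j})$ and dropping the high-eigenvalue terms; this is your compression-plus-commutation argument written entrywise.
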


\begin{proof}[Proof of Theorem~\ref{thm:sigma_to_rhoH_main}]
For globally linear adapters, the induced residual is linear in the adapter parameters:
$r_\theta(x)=\theta^\top \phi(x)$ for a frozen feature map $\phi(x)\in\mathbb R^d$.
Under squared loss,
\begin{align*}
L(\theta)=\frac12\,\mathbb E\bigl[(r_\theta(x)-r^\star(x))^2\bigr]
=\frac12\,\mathbb E\bigl[(\theta^\top\phi(x)-r^\star(x))^2\bigr]
=\frac12\,\theta^\top \Sigma\,\theta-\theta^\top c+\text{const},
\end{align*}
where $\Sigma=\mathbb E[\phi(x)\phi(x)^\top]$ and $c=\mathbb E[\phi(x)r^\star(x)]$.
Thus $\nabla^2 L(\theta)=\Sigma$, so $H=\Sigma$ and $\tilde\rho_H=\tilde\rho_\Sigma$.
Let $M_\Sigma=D_\Sigma^{-1/2}E_\Sigma D_\Sigma^{-1/2}$, hence $\tilde\rho_H=\|M_\Sigma\|_2$.

Using $\|A\|_2^2\ge \|A\|_F^2/d$ for any $d\times d$ matrix $A$,
\begin{align*}
\tilde\rho_H^2=\|M_\Sigma\|_2^2\ge \frac{1}{d}\|M_\Sigma\|_F^2
=\frac{1}{d}\sum_{\ell\neq k}\|(M_\Sigma)_{\ell k}\|_F^2,
\qquad
(M_\Sigma)_{\ell k}=\Sigma_{\ell\ell}^{-1/2}\Sigma_{\ell k}\Sigma_{kk}^{-1/2}.
\end{align*}
Diagonalize $\Sigma_{\ell\ell}=U_\ell\Lambda_\ell U_\ell^\top$ and write
$P_\ell=U_\ell\Pi_\ell U_\ell^\top$, where $\Pi_\ell$ selects eigenvalues $\le 2\sigma_\ell$.
Since $\sigma_\ell=\frac{1}{d_\ell}\sum_i \lambda_{\ell,i}$, at most $d_\ell/2$ eigenvalues exceed $2\sigma_\ell$,
so $\operatorname{rank}(P_\ell)\ge d_\ell/2$. For $\ell\neq k$,
\begin{align*}
\|(M_\Sigma)_{\ell k}\|_F^2
=\sum_{i,j}\frac{(U_\ell^\top\Sigma_{\ell k}U_k)_{ij}^2}{\lambda_{\ell,i}\lambda_{k,j}}
\ge
\frac{1}{4\sigma_\ell\sigma_k}\|\Pi_\ell(U_\ell^\top\Sigma_{\ell k}U_k)\Pi_k\|_F^2
=
\frac{1}{4\sigma_\ell\sigma_k}\|P_\ell\Sigma_{\ell k}P_k\|_F^2.
\end{align*}
Summing over $\ell\neq k$ yields
\begin{align*}
\tilde\rho_H^2
\ge
\frac{1}{4d}\sum_{\ell\neq k}
\frac{\|P_\ell\,\Sigma_{\ell k}\,P_k\|_F^2}{\sigma_\ell\,\sigma_k}.
\end{align*}
Taking square roots completes.

If the adapter is nonlinear, then under squared loss the Hessian at the frozen model admits the decomposition
\begin{align*}
H=\nabla^2 L(0)=G+R,
\qquad
G:=\mathbb E[J(x)^\top J(x)],
\end{align*}
where $J(x):=\nabla_\theta r_\theta(x)\big|_{\theta=0}$ is the Jacobian and $R$ collects the second-derivative term.
Assume there exist constants $M\ge 1$, $\alpha\in(0,1]$, and $\eta\in[0,1)$ such that for all layers $\ell$,
\begin{align*}
G_{\ell\ell}\preceq M\,\Sigma_{\ell\ell},
\qquad
\|P_\ell\,G_{\ell k}\,P_k\|_F \ge \alpha\,\|P_\ell\,\Sigma_{\ell k}\,P_k\|_F\ \ \text{for all }\ell\neq k,
\qquad
\|D_G^{-1/2} R\,D_G^{-1/2}\|_2\le \eta,
\end{align*}
where $P_\ell$ is defined as above from $\Sigma_{\ell\ell}$.
Repeating the argument above with $G$ in place of $\Sigma$ yields
\begin{align*}
\tilde\rho_G
\;\ge\;
\frac{\alpha}{\sqrt{M}}
\left(
\frac{1}{4d}\sum_{\ell\neq k}
\frac{\|P_\ell\,\Sigma_{\ell k}\,P_k\|_F^2}{\sigma_\ell\,\sigma_k}
\right)^{1/2}.
\end{align*}
Finally, since $H=G+R$ and $\|D_G^{-1/2} R\,D_G^{-1/2}\|_2\le \eta$, a triangle inequality in whitened coordinates gives
$\tilde\rho_H\ge \tilde\rho_G-\eta$, and therefore
\begin{align*}
\tilde\rho_H
\;\ge\;
\frac{\alpha}{\sqrt{M}}
\left(
\frac{1}{4d}\sum_{\ell\neq k}
\frac{\|P_\ell\,\Sigma_{\ell k}\,P_k\|_F^2}{\sigma_\ell\,\sigma_k}
\right)^{1/2}
-\eta.
\end{align*}

\end{proof}

The bound in~\eqref{eq:sigma_to_rhoH_linear} isolates the spectral mechanism by which small activation energy can amplify curvature-normalized coupling. Whitening by $\Sigma_{\ell\ell}^{-1/2}$ magnifies cross-layer covariance that aligns with low-eigenvalue directions of $\Sigma_{\ell\ell}$. When this occurs across many layers, $\tilde\rho_H$ can be large even if unwhitened covariances are not. Intuitively, the bound extends beyond linear adapters whenever the Jacobian preserves the geometry of the frozen representation on low-energy directions.

\section{More experiments and algorithm}

\Cref{fig:gpt2_large_profile} reports the relative training time and memory usage of GPT-2 Large across datasets and adapter placement strategies. Consistent with the trends discussed in the main text, mid-resnorm placement achieves a favorable balance between computational cost and performance, avoiding the inefficiencies of bottom-heavy or top-heavy configurations. In particular, concentrating updates in the middle layers yields comparable or lower training time and peak memory usage while maintaining strong downstream performance. Similar patterns are observed across evaluation metrics on DART (\Cref{tab:dart_metrics}), E2E (\Cref{tab:e2e_metrics}), WebNLG (\Cref{tab:webnlg_metrics}), and CIDEr (\Cref{tab:cider_methods}), where mid-resnorm or uniformly distributed placements consistently outperform bottom-resnorm baselines and match or exceed alternative strategies.

\label{app:exp-alg}

\begin{figure}[t!]
  \centering
  \includegraphics[width=0.75\linewidth]{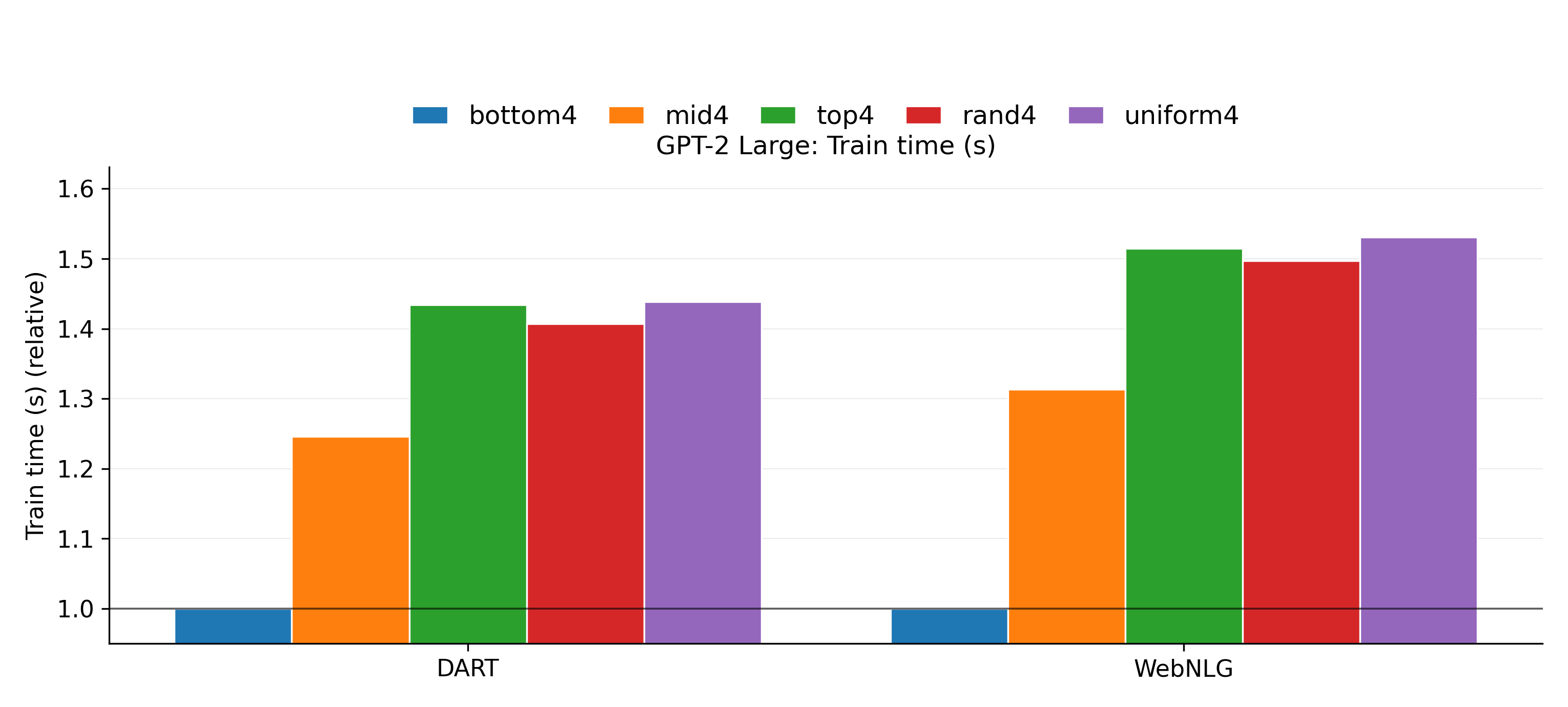}

  \vspace{0.4em}

  \includegraphics[width=0.75\linewidth]{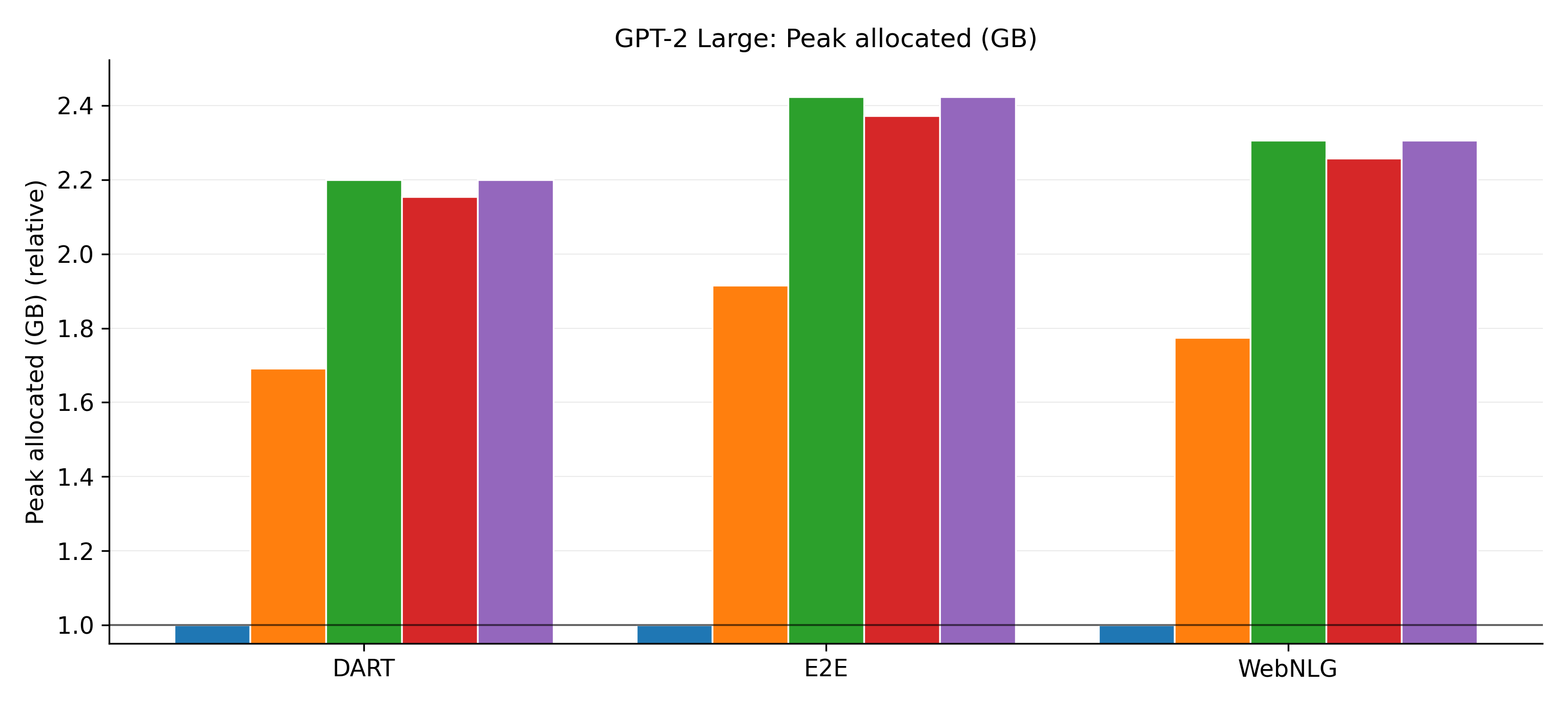}

  \vspace{0.4em}

  \includegraphics[width=0.75\linewidth]{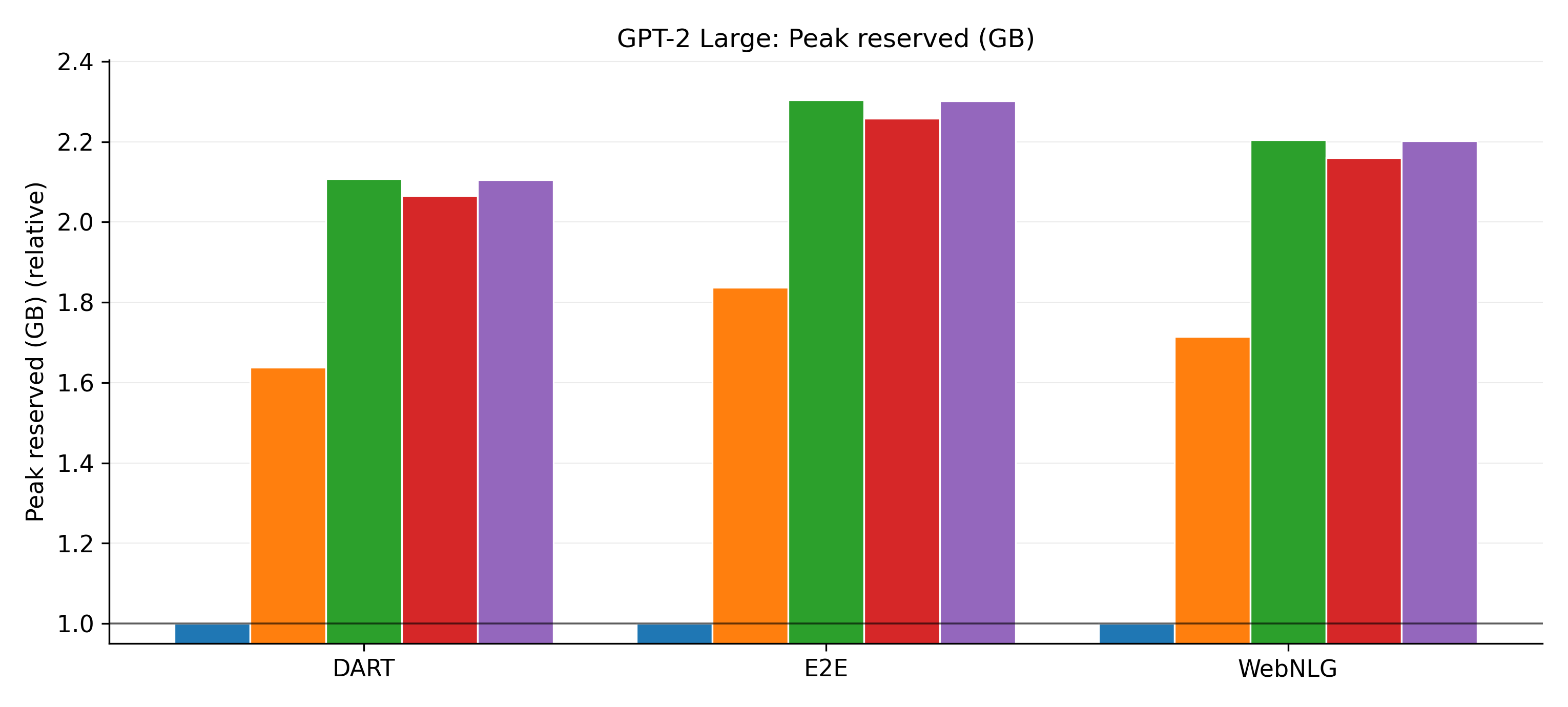}

  \caption{
  Relative training time and memory usage for GPT-2 Large across datasets and adapter placements; all configurations use identical adapter sizes.
  }
  \label{fig:gpt2_large_profile}
\end{figure}

\begin{table}[t!]
\centering
\caption{DART results across adapter placement strategies. Best performance per metric is highlighted.}
\label{tab:dart_metrics}

\vspace{0.4em}
\begin{tabular}{lccc|ccc}
\toprule
& \multicolumn{3}{c}{GPT-2 Medium} & \multicolumn{3}{c}{GPT-2 Large} \\
\cmidrule(lr){2-4} \cmidrule(lr){5-7}
Method & chrF++ & TER & BERT-F1 & chrF++ & TER & BERT-F1 \\
\midrule
rand4    
& 0.575 & 0.514 & 0.935 
& 0.606 & 0.494 & 0.940 \\
uniform4 
& 0.587 & 0.504 & 0.937 
& \textbf{0.610} & 0.495 & \textbf{0.941} \\
bottom4  
& 0.493 & 0.578 & 0.920 
& 0.538 & 0.550 & 0.927 \\
mid4     
& \textbf{0.595} & \textbf{0.501} & \textbf{0.938} 
& 0.606 & \textbf{0.491} & 0.940 \\
top4     
& 0.555 & 0.526 & 0.933 
& 0.595 & 0.502 & 0.939 \\
\bottomrule
\end{tabular}
\end{table}

\begin{table}[t!]
\centering
\caption{E2E results across adapter placement strategies. Best performance per metric is highlighted.}
\label{tab:e2e_metrics}

\vspace{0.4em}
\begin{tabular}{lcccc|cccc}
\toprule
& \multicolumn{4}{c}{GPT-2 Medium} & \multicolumn{4}{c}{GPT-2 Large} \\
\cmidrule(lr){2-5} \cmidrule(lr){6-9}
Method & BLEU & NIST & METEOR & ROUGE-L & BLEU & NIST & METEOR & ROUGE-L \\
\midrule
rand4    
& 0.637 & 7.775 & 0.409 & 0.675 
& 0.676 & 8.604 & 0.442 & 0.684 \\
uniform4 
& \textbf{0.670} & \textbf{8.572} & \textbf{0.433} & 0.679 
& 0.681 & \textbf{8.638} & 0.446 & 0.692 \\
bottom4  
& 0.611 & 6.021 & 0.377 & 0.674 
& 0.646 & 7.887 & 0.412 & 0.673 \\
mid4     
& 0.652 & 8.319 & 0.425 & \textbf{0.686} 
& \textbf{0.683} & 8.625 & \textbf{0.447} & \textbf{0.696} \\
top4     
& 0.628 & 6.787 & 0.390 & 0.670 
& 0.666 & 8.373 & 0.424 & 0.679 \\
\bottomrule
\end{tabular}
\end{table}

\begin{table}[t!]
\centering
\caption{WebNLG results across adapter placement strategies. Best performance per metric is highlighted.}
\label{tab:webnlg_metrics}

\vspace{0.4em}
\begin{tabular}{lccc|ccc}
\toprule
& \multicolumn{3}{c}{GPT-2 Medium} & \multicolumn{3}{c}{GPT-2 Large} \\
\cmidrule(lr){2-4} \cmidrule(lr){5-7}
Method & chrF++ & TER & BERT-F1 & chrF++ & TER & BERT-F1 \\
\midrule
rand4    
& 0.581 & 0.461 & 0.937 
& 0.647 & 0.410 & 0.947 \\
uniform4 
& 0.599 & 0.445 & 0.939 
& 0.636 & 0.412 & 0.946 \\
bottom4  
& 0.421 & 0.608 & 0.906 
& 0.489 & 0.548 & 0.917 \\
mid4     
& \textbf{0.613} & \textbf{0.442} & \textbf{0.941} 
& \textbf{0.657} & \textbf{0.407} & \textbf{0.948} \\
top4     
& 0.583 & 0.460 & 0.937 
& 0.624 & 0.430 & 0.943 \\
\bottomrule
\end{tabular}
\end{table}

As shown in \Cref{alg:layer-card}, the layer card $\mathcal C$ provides a structured summary of layerwise adaptation behavior by grouping layers into residual regimes and recording their empirical performance--cost trade-offs under fine-tuning. Although our analysis focuses on a limited set of representative metrics—resnorm, activation energy, downstream performance gain, and compute cost—the layer card abstraction is not restricted to these quantities. In principle, additional signals such as gradient noise, curvature statistics, optimization stability, or task-specific sensitivity measures can be incorporated as metadata without modifying the framework. A natural future direction is to collect richer layer-card features across models and datasets and explore whether learned predictors can leverage this metadata to forecast layerwise adaptation behavior or guide automated adapter placement and resource allocation.

\begin{figure}[t!]
  \centering
  \includegraphics[width=0.85\linewidth]{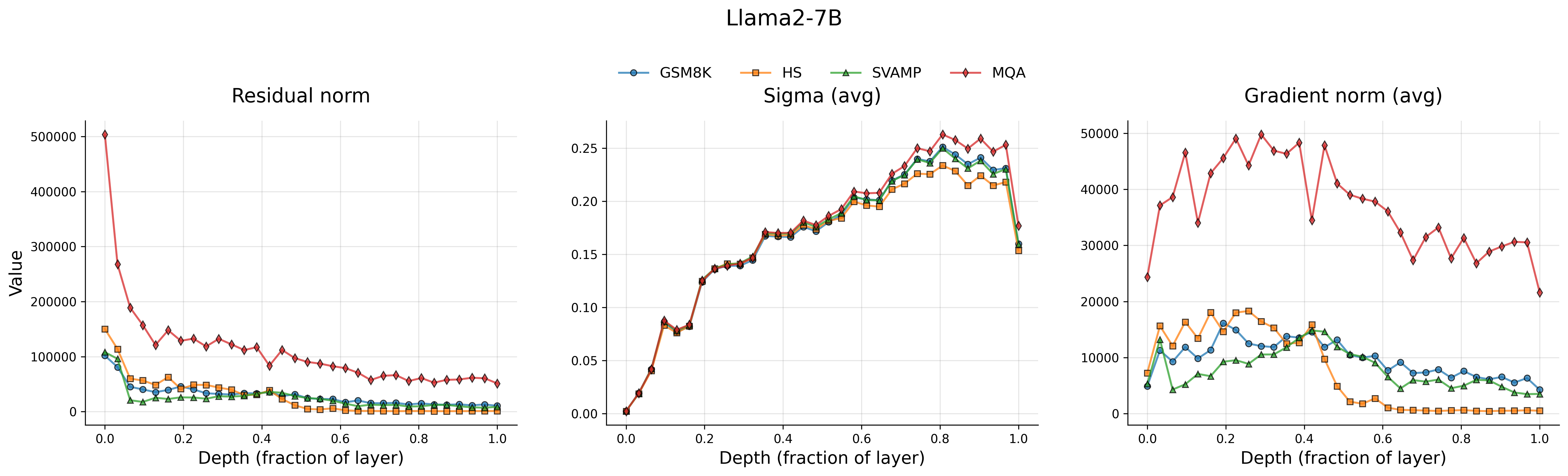}

  \vspace{0.6em}

  \includegraphics[width=0.85\linewidth]
  {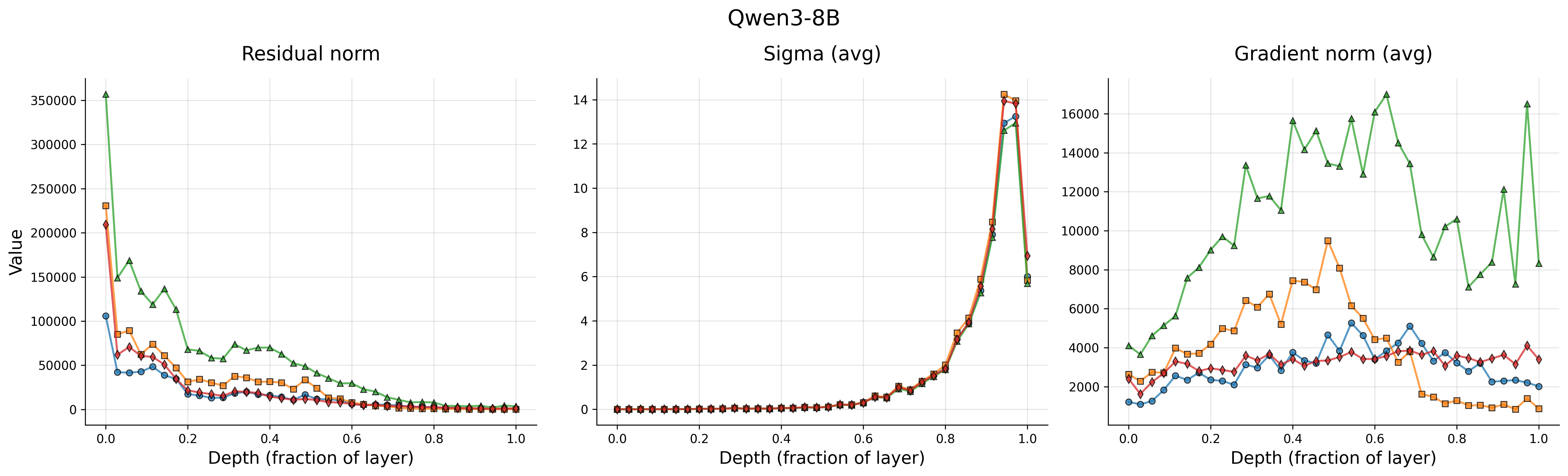}

  \caption{
  Layerwise profiles of projected residual norm, activation energy, and gradient
  norm across GSM8K, HS, SVAMP, and MQA.
  Top: Llama2-7B. Bottom: Qwen3-8B.
  }
  \label{fig:largescale_layer_profiles}
\end{figure}

The layer-wise profiles in \Cref{fig:largescale_layer_profiles} validate our theoretical framework. For both LLaMA2-7B and Qwen3-8B, the resnorm decreases with depth, while activation energy increases across layers. We further observe that raw gradient norms fail to reliably identify the resnorm regimes associated with the best performance in either model class.

\begin{algorithm}[t!]
\caption{Layer Card for LLM Fine-Tuning}
\label{alg:layer-card}
\begin{algorithmic}[1]

\REQUIRE
Frozen LLM $F$ with layers $\{1,\dots,L\}$;
profiling dataset $\mathcal D_{\mathrm{probe}}$;
validation dataset $\mathcal D_{\mathrm{eval}}$;
target dataset $\mathcal D$;
PEFT adapter class.

\ENSURE
Layer card $\mathcal C$ and selected adapter layer set $S$.

\STATE \textbf{Stage I: Layer Card Construction}
\FOR{each layer $\ell=1,\dots,L$}
    \STATE Freeze all layers except $\ell$
    \STATE Compute gradient block
    $g_\ell=\nabla_{\theta_\ell}\mathcal L(F;\mathcal D_{\mathrm{probe}})$
    \STATE Estimate activation energy
    $\widehat{\sigma}_\ell=\tfrac{1}{d_\ell}\, \mathbb E_{x\sim\mathcal D_{\mathrm{probe}}}\|\phi_\ell(x)\|_2^2$
    \STATE Compute projected residual norm
    $\widehat{\mathrm{Res}}_\ell=\|g_\ell\|_2 / \sqrt{\widehat{\sigma}_\ell}$
    \STATE Measure per-layer compute cost $c_\ell$
\ENDFOR

\STATE Stratify layers into residual regimes
$\{\mathcal R_1,\dots,\mathcal R_K\}$ by $\widehat{\mathrm{Res}}_\ell$

\FOR{each regime $\mathcal R_k$}
    \STATE Fine-tune adapters on $\mathcal R_k$
    using $\mathcal D_{\mathrm{eval}}$
    \STATE Evaluate performance gain $\Delta_k$
    and compute cost $C_k$
\ENDFOR

\STATE Construct layer card
\[
\mathcal C=\{(\mathcal R_k,\;\widehat{\mathrm{Res}}\text{-range}_k,\;
\widehat{\sigma}\text{-profile}_k,\;\Delta_k,\;C_k)\}_{k=1}^K
\]

\STATE \textbf{Stage II: Downstream Use}
\STATE Recompute $\widehat{\mathrm{Res}}_\ell^{(\mathcal D)}$ on target dataset $\mathcal D$
\STATE Compute rank vector
$r^{(\mathcal D)}=\mathrm{rank}(\widehat{\mathrm{Res}}^{(\mathcal D)})$

\FOR{each reference dataset $\mathcal D_j$ in layer card $\mathcal C$}
    \STATE Compute regime similarity
    \[
    s_j=\mathrm{corr}_{\mathrm{Spearman}}
    \bigl(r^{(\mathcal D)},\,r^{(\mathcal D_j)}\bigr)
    \]
\ENDFOR

\STATE Select reference dataset set
\[
\mathcal J^\star
=
\{\, j \;:\; s_j \ge \tau \,\},
\qquad
\text{where }\tau\in(0,1)\text{ is a similarity threshold (e.g. } \tau=0.9\text{)}.
\]

\STATE Collect regime statistics $(\mathcal R_k,\Delta_k,C_k)$
across $\{\mathcal D_j : j\in\mathcal J^\star\}$

\STATE User selects regime $k$ based on similarity-adjusted layer card
and resources

\STATE \textbf{return} $\mathcal C$ and $S\leftarrow\mathcal R_k$

\end{algorithmic}
\end{algorithm}

\end{document}